\theoremstyle{plain}
\newtheorem{theorem}{Theorem}[section]
\newtheorem{lemma}[theorem]{Lemma}
\newtheorem{corollary}[theorem]{Corollary}
\theoremstyle{definition}
\theoremstyle{remark}
\renewcommand{\algorithmicrequire}{\textbf{Input:}}
\renewcommand{\algorithmicensure}{\textbf{Output:}}
\newcommand\V[1]  { \mathbf{#1} }
\newcommand\B[1]  { \boldsymbol{#1} }
\newcommand\up[1] {\mathrm{#1}}
\newcommand\set[1] {\mathcal{#1}}
\newcommand\sset[1] {\mathsmaller{\mathcal{#1}}}
\acrodef{MRC}{minimax risk classifier}
\acrodef{RRM}{robust risk minimization}
\acrodef{KMM}{kernel mean matching}
\acrodef{DW-KMM}{double-weighting \ac{KMM}}
\acrodef{GKMM}{generalized kernel mean matching}
\acrodef{MSE}{mean square error}
\acrodef{RFF}{random fourier features}
\acrodef{RKHS}{reproducing kernel Hilbert space}
\icmltitlerunning{Double-Weighting for Covariate Shift Adaptation}
\begin{document}

\twocolumn[
\icmltitle{Double-Weighting for Covariate Shift Adaptation}



\icmlsetsymbol{equal}{*}

\begin{icmlauthorlist}
\icmlauthor{José I. Segovia-Martín}{BCAM}
\icmlauthor{Santiago Mazuelas}{BCAM,Iker}
\icmlauthor{Anqi Liu}{JHU}
\end{icmlauthorlist}

\icmlaffiliation{JHU}{CS department, Whiting School of Engineering, Johns Hopkins University, Baltimore, Maryland, USA}
\icmlaffiliation{BCAM}{Basque Center for Applied Mathematics (BCAM), Bilbao, Spain}
\icmlaffiliation{Iker}{IKERBASQUE-Basque Foundation for Science}

\icmlcorrespondingauthor{José I. Segovia-Martín}{jsegovia@bcamath.org}
\icmlcorrespondingauthor{Santiago Mazuelas}{smazuelas@bcamath.org}
\icmlcorrespondingauthor{Anqi Liu}{aliu@cs.jhu.edu}

\icmlkeywords{Covariate Shift, Supervised Classification, Selection Bias, Minimax Classification}

\vskip 0.3in
]

\printAffiliationsAndNotice{} 

\begin{abstract}
Supervised learning is often affected by a covariate shift in which the marginal distributions of instances (covariates $x$) of training and testing samples $\up{p}_\text{tr}(x)$ and $\up{p}_\text{te}(x)$ are different but the label conditionals coincide.
Existing approaches address such covariate shift by either using the ratio $\up{p}_\text{te}(x)/\up{p}_\text{tr}(x)$ to weight training samples (reweighted methods) or using the ratio $\up{p}_\text{tr}(x)/\up{p}_\text{te}(x)$ to weight testing samples (robust methods).  
However, the performance of such approaches can be poor under support mismatch or when the above ratios take large values.
We propose a minimax risk classification (MRC) approach for covariate shift adaptation that avoids such limitations by weighting both training and testing samples.
In addition, we develop effective techniques that obtain both sets of weights and  generalize the conventional kernel mean matching method.
We provide novel generalization bounds for our method that show a significant increase in the effective sample size compared with reweighted methods. 
The proposed method also achieves enhanced classification performance in both synthetic and empirical experiments. 
\end{abstract}
\vspace{-0.2cm}
\section{Introduction}
\vspace{-0.1cm}
Most supervised learning methods assume that training and testing samples are drawn i.i.d. from the same underlying distribution.
However, practical scenarios are often affected by a covariate shift in which the marginal distributions of instances (covariates $x$) of training and testing samples 
$\up{p}_\text{tr}(x)$ and $\up{p}_\text{te}(x)$ 
are different (see e.g., \cite{Sugiyama2012,Quinonero2008}), while the conditional label distribution stays the same.
In such scenarios, conventional supervised classification methods, like empirical risk minimization, can perform poorly because the empirical risk is approximating the training expected risk, rather than the test expected risk.

Most of the existing methods for covariate shift adaptation are based on the reweighted approach \cite{Sugiyama2012,Quinonero2008,Cortes2008,Huang2006}. 
These methods weight loss functions at training using the ratio $\up{p}_{\text{te}}(x)/\up{p}_{\text{tr}}(x)$ so that training samples more likely in the test distribution are assigned higher weights (see Fig.~\ref{fig_1:synthetic_balls}), increasing their relevance at training. Such ratios can be estimated by using training and testing instances 
\cite{Tsuboi2009,Yamada2011,Liu2013}.
Reweighted methods are designed for situations where the support of $\up{p}_\text{tr}$ contains that of $\up{p}_\text{te}$. 
However, even if such condition is satisfied, reweighted methods may achieve poor performances if the ratio $\up{p}_{\text{te}}(x)/\up{p}_{\text{tr}}(x)$ take large values at certain training samples, leading to inaccurate estimations of expected losses (see e.g., \cite{Cortes2014,Reddi2015}). 
Such problems can be alleviated by flattening the above ratio \citep{Shimodaira2000,Yamada2011}, by utilizing a regularization term based on the unweighted solution \cite{Reddi2015}, and by directly estimating weights for training samples through \ac{KMM} methods \citep{Gretton2009,Huang2006}.  

Robust methods for covariate shift adaptation \citep{Liu2014,Liu2017,Chen2016} are derived from a distributionally robust learning framework, where the feature expectation matching constraints are obtained from training samples but the adversarial risk is defined on the test distribution. 
Such methods weight feature functions at testing using the ratio $\up{p}_{\text{tr}}(x)/\up{p}_{\text{te}}(x)$ (see Fig.~\ref{fig_1:synthetic_balls}). 
The resulting parametric form produces less confident predictions when testing samples are less likely in the training distribution. 
Robust methods are designed for situations where the support of $\up{p}_\text{te}$ contains that of $\up{p}_\text{tr}$. 
However, even if such condition is satisfied, robust methods may achieve poor performances if the ratio $\up{p}_{\text{tr}}(x)/\up{p}_{\text{te}}(x)$ take large values at certain testing samples, leading to overconfident classification rules.

\begin{figure*}
	\label{fig_1:synthetic_balls}
	\centering
        \psfrag{a}[l][b][0.9]{\hspace{-1.5cm}\textcolor{red}{$\newmoon$} Weights for training samples} 
	\psfrag{b}[l][b][0.9]{\hspace{-.2cm}\textcolor{blue}{$\newmoon$} Weights for testing samples}
        \psfrag{x}[l][b][0.9]{\hspace{-1.7cm}Reweighted approach} 
	\psfrag{v}[l][b][0.9]{\hspace{-1.3cm}Robust approach}
        \psfrag{z}[l][b][0.9]{\hspace{-1.4cm}Proposed approach} 
	\includegraphics[width=0.85\textwidth]{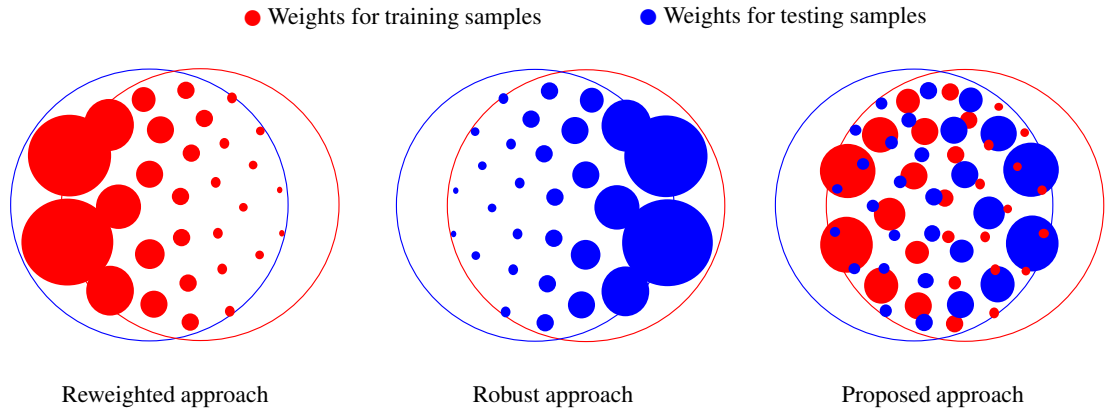}
	\caption{Different approaches for covariate shift adaptation (training and testing instances follow Gaussian distributions with probability mass concentrated in the red and blue circles, resp.). 
	Reweighted methods weight training instances $x$ using the ratio $\beta(x)=\up{p}_{\text{te}}(x)/\up{p}_{\text{tr}}(x)$ while robust methods weight testing instances $x$ using the ratio $\alpha(x)=\up{p}_{\text{tr}}(x)/\up{p}_{\text{te}}(x)$. 
        The proposed approach utilizes weights both for training and testing instances and can avoid large weights $\beta(x)$ by reducing the corresponding $\alpha(x)$ and avoid large weights $\alpha(x)$ by reducing the corresponding $\beta(x)$.}
        \vspace{-0.2cm}
\end{figure*}

In practice, the distributions of training and testing instances can differ in an arbitrary manner  (e.g., their supports may not be contained in each other).
This paper proposes a learning methodology that can tackle such general covariate shift and addresses the limitations of existing approaches by weighting both training and testing samples (see Fig.~\ref{fig_1:synthetic_balls}). 
In particular, the methods proposed are based on \acp{MRC} \citep{Mazuelas2022generalized,Mazuelas2023minimax} and utilize weighted  averages of training samples to estimate expectations of weighted feature functions under the test distribution. 
Specifically, the main contributions in the paper are as follows.

\begin{itemize}	[leftmargin=13pt,topsep=0pt,itemsep=0mm]		
	\item We present a learning framework for general covariate shift adaptation based on a double-weighting of both training and testing samples. Our framework encompasses existing approaches for specific choices of weights.
	
	\item We propose effective techniques that obtain weights for training and testing samples, and generalize the conventional \ac{KMM} that only obtains weights for training samples. 
		
	\item We develop generalization bounds for the proposed methods that show a significant increase in effective sample size compared with reweighted approaches.
 	
	\item We experimentally assess the performance improvement obtained by the proposed techniques in multiple covariate shift scenarios. 
\end{itemize}

\textbf{Notations.} Calligraphic  upper case letters represent sets;
 bold lower and upper case letters represent vectors and matrices, respectively;
 for a vector $\V{v}$, $v^{(i)}$ denotes its $i$-th component, $|\V{v}|$ denotes its component-wise absolute value, and $(\V{v})_{+}$ denotes its positive part;
 $\B{1}$ denotes a vector with all components equal to 1;
 $||\cdot||_1$, $||\cdot||_{\infty}$, and $||\cdot||_{\mathcal{H}}$ denote the 1-norm, the infinity, and the Hilbert space norm of its argument, respectively;
 $\preceq$ and $\succeq$ represent vector component-wise inequalities; $N(\V{m},\B{\Sigma})$ denotes the pdf of a Gaussian r.v. $\V{x}$ with mean $\V{m}$ and covariance matrix $\B{\Sigma}$;  
 and $\mathbb{E}_{\up{p}}\{\cdot\}$ denotes the expectation of its argument w.r.t distribution $\up{p}$.
\vspace{-0.2cm}
\section{Preliminaries}\label{preliminaries}
\vspace{-0.1cm}
This section describes the learning setup, the two main existing approaches for covariate shift adaptation, and the framework of \acp{MRC}.

\textbf{Setup.} Let $\set{X}$ be the set of instances and $\set{Y}$ the set of labels represented by the set $\left\lbrace 1,\ldots,|\set{Y}|\right\rbrace $. 
We denote by $\Delta(\set{X}\times\set{Y})$ the set of probability distributions over $\set{X}$ and $\set{Y}$, and by $\text{T}(\set{X},\set{Y})$ the set of classification rules.
For $\up{h}\in \text{T}(\set{X},\set{Y})$, we denote by $\up{h}(y|x)$ the probability with which instance $x\in\set{X}$ is classified by label $y\in\set{Y}$.
We use the notation $\up{p}_{\text{te}}$ for the underlying distribution at test, and $(x_1,y_1),(x_2,y_2)\ldots,(x_n,y_n)$ for the set of training samples.
The $\ell$-risk of a classification rule $\up{h}$ is its expected classification loss with respect to the true underlying distribution at test $\up{p}_{\text{te}}$, i.e., $ R(\up{h})=\mathbb{E}_{\up{p}_{\text{te}}}\left\lbrace\ell\left(\up{h},(x,y)\right)\right\rbrace$.

The learning objective is to use the training samples to find a classification rule $\up{h}$ that has small $\ell$-risk $R(\up{h})$. In this paper, we consider 0-1-loss and log-loss:
\begin{alignat}{2}\label{eq_2:losses}
    &\ell_{01}\left(\up{h},(x,y)\right)   &&= 1-\up{h}(y|x)\\
    &\ell_{\log}\left(\up{h},(x,y)\right) &&= -\log\up{h}(y|x).
\end{alignat}
\textbf{Covariate shift.}  
Under covariate shift, the training samples $(x_1,y_1),(x_2,y_2),\ldots,(x_n,y_n)$ follow a distribution $\up{p}_{\text{tr}}(x,y)$ such that  the marginal distributions of instances differ, $\up{p}_{\text{te}}(x)\neq \up{p}_{\text{tr}}(x)$, but label conditionals coincide, $\up{p}_{\text{tr}}(y|x)=\up{p}_{\text{te}}(y|x)$.
In addition, covariate shift methods assume that the ratio between $\up{p}_{\text{tr}}(x)$ and $\up{p}_{\text{te}}(x)$ is known or that $t$ unsupervised samples $x_{n+1},x_{n+2},\ldots,x_{n+t}$ from $\up{p}_{\text{te}}(x)$ are known at training. 
In previous literature, it is also usually assumed that the training support contains that at testing or vice versa.   
In this paper, we consider general scenarios of covariate shift in which such supports are not required to contain each other.
\vspace{-0.2cm}
\subsection{Main existing approaches}\label{Section_2.1}
\vspace{-0.1cm}
\textbf{Reweighted methods.} 
Most of the techniques for covariate shift adaptation are based on the reweighted approach \citep{Sugiyama2012,Shimodaira2000,Zadrozny2004,Cortes2008,Dudik2005,Lin2002}.
These methods exploit that, for any function $f$, we have that 
\begin{equation}\label{eq_2:reweighted_expectation}
\mathbb{E}_{\up{p}_{\text{te}}}f(x,y)=\mathbb{E}_{\up{p}_{\text{tr}}}\beta(x)f(x,y),\text{ for }\beta(x)=\frac{\up{p}_{\text{te}}(x)}{\up{p}_{\text{tr}}(x)}
\end{equation}
if $\up{p}_{\text{te}}(x)>0 \Rightarrow \up{p}_{\text{tr}}(x)>0$.
Reweighted methods weight loss functions at training by means of  the weight function $\beta(x)$ in \eqref{eq_2:reweighted_expectation}, as detailed in Appendix~\ref{Appendix_1:Reweighted_Robust}. 
Using these weights, such methods can account for the fact that some training instances are unlikely at testing, and assign low relevance to such instances at training  (see Fig.\ref{fig_1:synthetic_balls}).

Reweighted methods assume the support of $\up{p}_{\text{tr}}$ contains that of $\up{p}_{\text{te}}$ (i.e., $\up{p}_{\text{te}}(x)>0\Rightarrow\up{p}_{\text{tr}}(x)>0$) so that \eqref{eq_2:reweighted_expectation} is valid.
Even if this condition is satisfied, such methods may achieve poor performances if the ratio $\beta(x)$ in \eqref{eq_2:reweighted_expectation} takes large values at certain training samples. 
In these cases, the learning process is dominated by few training samples (see e.g., \cite{Cortes2014,Gretton2009}).
The flattening approach alleviates such problems using weights for training samples smoothed utilizing a hyperparameter $\gamma$ as $\left(\up{p}_{\text{te}}(x)/\up{p}_{\text{tr}}(x)\right)^\gamma$ in \cite{Shimodaira2000} and as $\up{p}_{\text{te}}(x)/\left(\gamma\up{p}_{\text{te}}(x)+(1-\gamma)\up{p}_{\text{tr}}(x)\right)$ in \cite{Yamada2011}.

\textbf{Robust methods.} 
Robust methods under covariate shift \citep{Liu2014,Liu2017} exploit that, for any $f$:
\begin{equation}\label{eq_2:robust_expectation}
	\mathbb{E}_{\up{p}_{\text{te}}}\alpha(x)f(x,y)=\mathbb{E}_{\up{p}_{\text{tr}}}f(x,y)\text{, for }\alpha(x)=\frac{\up{p}_{\text{tr}}(x)}{\up{p}_{\text{te}}(x)}
\end{equation}
if $\up{p}_{\text{tr}}(x)>0 \Rightarrow \up{p}_{\text{te}}(x)>0$.
Robust methods weight feature functions at testing\footnote{The robust bias-aware prediction weight the feature functions in both training and testing as the weight appears in the predictive parametric form. Here we are emphasizing the weights at testing to show a symmetric view between these two methods.} by means of the weight function $\alpha(x)$ in \eqref{eq_2:robust_expectation}, as detailed in Appendix~\ref{Appendix_1:Reweighted_Robust}.
Using these weights, such methods can account for the fact that some testing instances are unlikely at training, and consider rules that assign low-confidence predictions to such instances (see Fig.~\ref{fig_1:synthetic_balls}). 

Robust methods assume the support of $\up{p}_{\text{te}}$ contains that of $\up{p}_{\text{tr}}$ (i.e., $\up{p}_{\text{tr}}(x)>0\Rightarrow\up{p}_{\text{te}}(x)>0$) so that \eqref{eq_2:robust_expectation} is valid. 
Even if this condition is satisfied, such methods may achieve poor performances if the ratio $\alpha(x)$ in \eqref{eq_2:reweighted_expectation} takes large values at certain testing samples. 
In these cases, the classification rule would only provide confident predictions at few testing samples. 

The connection and symmetric relation between reweighted and robust methods are also shown in Theorem~3 in \citep{Liu2014}. 
They can both be regarded as special cases of the adversarial risk minimization framework \citep{Fathony2016} when the feature expectation matching constraints are set to match different empirical estimates of the features.  
To enable covariate shift adaptation in general cases (e.g., training and testing supports not contained in each other), this paper proposes a learning framework that avoids the limitations of existing methods by utilizing a double-weighting approach.
\vspace{-0.2cm}
\subsection{Minimax Risk Classifiers}
\vspace{-0.1cm}
 Similarly to other approaches based on \ac{RRM} \citep{Farnia2016,Fathony2016}, \ac{MRC} methods \citep{Mazuelas2022generalized,Mazuelas2023minimax} do not require that the training samples follow the same distribution as the testing samples.
\acp{MRC} minimize the \mbox{worst-case} expected loss with respect to distributions in uncertainty sets that can contain the true underlying distribution with high probability. 
The uncertainty sets are given by constraints on the expectation of a function $\Phi:\set{X}\times\set{Y}\longrightarrow \mathbb{R}^m$ referred to as feature mapping.
Such a function can be defined using \mbox{one-hot} encodings of the elements of $\set{Y}$ as $\Phi(x,y)=\B{e}_{y}\otimes\V{x}$,
where $\B{e}_y$ is the \mbox{$y$-th} element of the canonical basis of $\mathbb{R}^{|\set{Y}|}$ and $\otimes$ denotes the Kronecker product.

Given the uncertainty set $\set{U}$, we say that a classification rule $\up{h}^{\sset{U}}$ is a $\ell$-\ac{MRC} for $\set{U}$ if 
\begin{equation}\label{eq_2:classification_rule}
   \up{h}^{\sset{U}} \in\arg\min_{\up{h}\in \text{T}(\set{X},\set{Y})}\max_{\up{p}\in\set{U}}\ell(\up{h},\up{p})
\end{equation}
and, we denote by $R(\set{U})$ the minimax risk against $\set{U}$, i.e.,
\begin{equation}\label{eq_2:Risk_l(U)}
    R(\set{U})=\min_{\up{h}\in \text{T}(\set{X},\set{Y})}\max_{\up{p}\in\set{U}}\ell(\up{h},\up{p})
\end{equation}
where $\ell(\up{h},\up{p})$ denotes the expected loss of classification rule $\up{h}$ w.r.t. distribution $\up{p}$.
\vspace{-0.2cm}
\section{Framework for Adaptation to General Covariate Shift}\label{Section_3}
\vspace{-0.1cm}
This section first describes the proposed double-weighting approach and the corresponding \ac{MRC} learning methodology.  We then describe its relationship with existing techniques,  present finite-sample generalization bounds for the proposed methods, and discuss the trade-off involved in the choice of weight functions.
\vspace{-0.2cm}
\subsection{Double-weighting}\label{Section_3.1}
\vspace{-0.1cm}
The proposed framework considers both training and testing weights, $\beta(x)$ and $\alpha(x)$ (see Fig.\ref{fig_1:synthetic_balls}). 
We exploit the fact that, for any function $f$, we have that 
\begin{equation}\label{eq_3:dw_expectation}
	\mathbb{E}_{\up{p}_{\text{te}}}\alpha(x)f(x,y)=\mathbb{E}_{\up{p}_{\text{tr}}}\beta(x)f(x,y)
\end{equation}
can be attained by multiple choices of weights $\alpha(x)$ and $\beta(x)$. 
For instance, it is satisfied taking
\begin{equation} \label{eq_3:alphabeta_sol}
	\begin{split}
		\alpha(x)=\min\left( C\frac{\up{p}_{\text{tr}}(x)}{\up{p}_{\text{te}}(x)},1\right),
		\beta(x)=\min\left( \frac{\up{p}_{\text{te}}(x)}{\up{p}_{\text{tr}}(x)},C\right)
	\end{split}
\end{equation}
for any $C>0$, since $\alpha(x)\up{p}_{\text{te}}(x)=\beta(x)\up{p}_{\text{tr}}(x)$, $\forall x\in\set{X}$.
Notice that the equality in \eqref{eq_3:dw_expectation} is satisfied taking weights as in \eqref{eq_3:alphabeta_sol} even if the supports of $\up{p}_{\text{tr}}$ and $\up{p}_{\text{te}}$ do not contain each other.

Such a double-weighting approach can avoid the limitations of reweighed and robust methods. For $x\in\set{X}$ with large ratio $\up{p}_{\text{te}}(x)/\up{p}_{\text{tr}}(x)$, using a small $\alpha(x)$ can enable to have $\alpha(x)\up{p}_{\text{te}}(x)=\beta(x)\up{p}_{\text{tr}}(x)$ with moderate values of $\beta(x)$.
Reciprocally, for $x\in\set{X}$ with large ratio $\up{p}_{\text{tr}}(x)/\up{p}_{\text{te}}(x)$, using a small $\beta(x)$ can enable to have \mbox{$\alpha(x)\up{p}_{\text{te}}(x)=\up{p}_{\text{tr}}(x)\beta(x)$} with moderate values of $\alpha(x)$.
For instance, using weights as in \eqref{eq_3:alphabeta_sol} we have that \mbox{$\beta(x)\leq C$} and $\alpha(x)\leq 1$ for any $x\in\set{X}$.
Considering both weights $\beta(x)$ and $\alpha(x)$, we can \emph{both} assign low relevance to training instances that are unlikely at testing, \emph{and also} assign low-confidence predictions to testing instances that are unlikely at training.
\vspace{-0.2cm}
\subsection{MRC learning framework using double-weighting}\label{framework}
\vspace{-0.1cm}
The proposed framework adapts to general covariate shift by constructing the uncertainty set $\set{U}$ in \eqref{eq_2:classification_rule} using both weights $\alpha(x)$ and $\beta(x)$. 
In particular, we use  feature mappings weighted by $\alpha(x)$ as $\Phi_\alpha(x,y)=\alpha(x)\Phi(x,y)$ and constrain the difference between the expectation and empirical mean estimates of feature mappings as follows
\begin{alignat}{1}\label{uncertainty_set}
		\set{U}=\left\lbrace\right.&\up{p}\in\Delta\left(\set{X}\times\set{Y}\right):\left|\mathbb{E}_{\up{p}}\Phi_\alpha(x,y)-\B{\tau}\right|\preceq\B{\lambda}\notag\\
		&\left.\text{and }\up{p}(x)=\up{p}_{\text{te}}(x),\forall x\in\mathcal{X}\right\rbrace
\end{alignat}
where $\B{\tau}$ denotes the mean vector of expectation estimates, and $\B{\lambda}$ is a vector that determines the confidence with which $\up{p}_\text{te}(x,y)\in\set{U}$. 
The expectation of the feature mapping $\Phi_\alpha(x,y)$ is estimated using averages of training samples weighted by $\beta(x)$ as
\begin{align}\label{eq_3:tau}
\hspace{-0.1cm}\B{\tau}=\frac{1}{n}\sum_{i=1}^n\Phi_\beta(x_i,y_i),\, \mbox{for }\Phi_\beta(x,y)=\beta(x)\Phi(x,y).
\end{align}
Notice that the mean vector $\B{\tau}$ is an unbiased estimator of $\mathbb{E}_{\up{p}_{\text{te}}} \Phi_{\alpha}(x,y)$ for any choice of weights satisfying $\alpha(x)\up{p}_{\text{te}}(x)=\beta(x)\up{p}_{\text{tr}}(x)$, in particular for those given by \eqref{eq_3:alphabeta_sol}. 
In addition, the accuracy of $\B{\tau}$ can be improved using weights $\alpha(x)$ that avoid large weights $\beta(x)$, as discussed in  Section~\ref{sec_3.4} below.

\textbf{Convex optimization.} We next show how \acp{MRC} corresponding with uncertainty sets \eqref{uncertainty_set} can be learned by solving the convex optimization problem
\begin{equation}\label{eq_3:CSMRC}
	\min_{\B{\mu}}-\B{\tau}^\text{T}\B{\mu}+\mathbb{E}_{\up{p}_{\text{te}}(x)}\varphi_{\ell}(\B{\mu},x,\alpha(x))+\B{\lambda}^\text{T}|\B{\mu}|
\end{equation}
where $\varphi_{\ell}$ is a function defined under different loss functions. 
For 0-1-loss, we have
\begin{equation}
	\label{varphi_01}
	\varphi_{01}(\B{\mu},x,\alpha(x))=1+\max_{\set{C}\subseteq\set{Y}}\frac{\sum_{y\in\set{C}}\Phi_\alpha(x,y)^\text{T}\B{\mu}-1}{|\set{C}|}
\end{equation}
and, for log-loss, we have
\begin{equation}
	\label{varphi_log}
	\varphi_{\log}(\B{\mu},x,\alpha(x))=\log\sum_{y\in\set{Y}}\exp\left\lbrace\Phi_{\alpha}(x,y)^\text{T}\B{\mu}\right\rbrace.
\end{equation}

\begin{theorem}\label{th_3:MRC}
    Let $\B{\tau},\B{\lambda}\in\mathbb{R}^m$ be such that the uncertainty set $\set{U}$ in \eqref{uncertainty_set} is not the empty set.
    If $\B{\mu}^*$ is a solution of \eqref{eq_3:CSMRC} for 0-1-loss, the classification rule 
    \begin{equation}\label{eq_3:h_equality_01}
        \up{h}^{\sset{U}}(y|x)=\left(\alpha(x)\Phi(x,y)^\text{T}\B{\mu}^*-\varphi_{01}(\B{\mu}^*,x,\alpha(x))+1\right)_{+} 
    \end{equation}
    is a 0-1-\ac{MRC} for $\set{U}$.
    If $\B{\mu}^*$ is a solution of \eqref{eq_3:CSMRC} for log-loss, the classification rule
    \begin{equation}\label{eq_3:h_equality_log}
        \up{h}^{\sset{U}}(y|x)=\exp\left\lbrace\alpha(x)\Phi(x,y)^\text{T}\B{\mu}^*-\varphi_{\log}(\B{\mu}^*,x,\alpha(x))\right\rbrace
    \end{equation}
    is a log-\ac{MRC} for $\set{U}$.
    In addition, the minimax risk $R(\set{U})$ is given by
    \begin{equation}\label{eq_3:R_l(U)}
        R(\set{U})=-\B{\tau}^\text{T}\B{\mu}^*+\mathbb{E}_{\up{p}_{\text{te}}(x)}\varphi_{\ell}(\B{\mu}^*,x,\alpha(x))+\B{\lambda}^\text{T}|\B{\mu}^*|.
    \end{equation}
\end{theorem}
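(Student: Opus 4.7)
The plan is to adapt the standard \ac{MRC} derivation of \citet{Mazuelas2022generalized,Mazuelas2023minimax} to the double-weighted setting. I would start from $R(\set{U})=\min_{\up{h}}\max_{\up{p}\in\set{U}}\ell(\up{h},\up{p})$ and apply Lagrangian duality to the two-sided feature-expectation constraint $|\mathbb{E}_{\up{p}}\Phi_\alpha-\B{\tau}|\preceq\B{\lambda}$. Encoding each half with multipliers $\B{\mu}_+,\B{\mu}_-\succeq\V{0}$ and collapsing them into $\B{\mu}=\B{\mu}_+-\B{\mu}_-\in\mathbb{R}^m$ by minimizing the nonnegative components pointwise produces the penalty $\B{\lambda}^{\text{T}}|\B{\mu}|$ together with a linear term $-\B{\mu}^{\text{T}}\mathbb{E}_{\up{p}}\Phi_\alpha+\B{\mu}^{\text{T}}\B{\tau}$. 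The remaining constraint $\up{p}(x)=\up{p}_\text{te}(x)$ simply restricts $\up{p}$ to the form $\up{p}_\text{te}(x)\up{q}(y|x)$ with $\up{q}(\cdot|x)\in\Delta(\set{Y})$, so the inner maximization over $\up{p}$ decouples across instances $x$.

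Next, I swap the order of the remaining operations. For each $x$, Sion's minimax theorem applies to the pair $(\up{h}(\cdot|x),\up{q}(\cdot|x))\in\Delta(\set{Y})\times\Delta(\set{Y})$ (both convex and compact), so the $\min_{\up{h}}\max_{\up{q}}$ can be exchanged, and the outer $\min_{\B{\mu}}$ can be pulled in front of the expectation in $x$. After the harmless substitution $\B{\mu}\mapsto-\B{\mu}$ (which does not change $|\B{\mu}|$), this reduces the problem to $\min_{\B{\mu}}[-\B{\tau}^{\text{T}}\B{\mu}+\B{\lambda}^{\text{T}}|\B{\mu}|+\mathbb{E}_{\up{p}_\text{te}(x)}\Psi(\B{\mu},x)]$, where $\Psi(\B{\mu},x)$ is the per-$x$ simplex minimax
\begin{equation*}
\Psi(\B{\mu},x)=\min_{\up{h}(\cdot|x)\in\Delta(\set{Y})}\max_{y\in\set{Y}}\bigl[\ell(\up{h},(x,y))+\alpha(x)\Phi(x,y)^{\text{T}}\B{\mu}\bigr].
\end{equation*}

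Then I would evaluate $\Psi$ for each loss. For $\ell_{01}$, $\Psi$ is a finite LP on $\Delta(\set{Y})$ whose value equals $\varphi_{01}(\B{\mu},x,\alpha(x))$ via the standard subset-average dual characterization, and whose primal optimizer is exactly the truncated-affine rule~\eqref{eq_3:h_equality_01}. For $\ell_{\log}$, the conjugate duality between Shannon entropy on $\Delta(\set{Y})$ and log-sum-exp yields $\Psi=\varphi_{\log}(\B{\mu},x,\alpha(x))$, with minimizer the softmax in~\eqref{eq_3:h_equality_log} (by the Gibbs identity, the optimal $\up{h}$ coincides with the optimal adversarial $\up{q}$). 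Substituting these evaluations reproduces problem~\eqref{eq_3:CSMRC}, and strong duality---supplied by convexity together with the Slater-type nonemptiness assumption on $\set{U}$---yields~\eqref{eq_3:R_l(U)} and ensures that the rule assembled from the per-$x$ minimizers attains the outer minimax.

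The main obstacle I expect is rigorously justifying the chain of min/max swaps on domains that are products over a possibly continuous $\set{X}$: Sion's theorem is immediate at each fixed $x$, but the exchange with the Lagrangian $\min_{\B{\mu}}$ and the integration against $\up{p}_\text{te}$ requires measurable selection and a strong-duality argument. Nonemptiness of $\set{U}$ is precisely the hypothesis that rules out a duality gap; once that is in place, reading off the primal optimizers~\eqref{eq_3:h_equality_01} and~\eqref{eq_3:h_equality_log} from the KKT conditions of the per-$x$ simplex problems is routine.
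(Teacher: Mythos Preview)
Your proposal is correct and follows essentially the same Lagrangian-duality route as the paper's proof. The only cosmetic difference is that the paper introduces an explicit dual variable $\nu(x)$ for the marginal constraint $\up{p}(x)=\up{p}_{\text{te}}(x)$ and minimizes jointly over $(\up{h},\B{\mu},\nu)$, whereas you parameterize $\up{p}=\up{p}_{\text{te}}(x)\up{q}(y|x)$ and decompose per instance; both arrive at the same per-$x$ identity $\varphi_\ell(\B{\mu},x,\alpha(x))$, and the paper likewise restricts the rigorous argument to finite $\set{X}$ (deferring the continuous case to Fenchel duality), which matches the obstacle you flagged.
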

\vspace{-0.4cm}
\begin{proof}
    See Appendix~\ref{Appendix_2:Proofs_Sect3}.
\end{proof}
\vspace{-0.3cm}

\textbf{Remarks.} 
The optimization in \eqref{eq_3:CSMRC} can be addressed in practice using conventional optimization methods such as stochastic gradient descent. 
If unlabeled instances from the test distribution are available at training, they can directly be used to obtain samples corresponding to the (sub)gradient of $\mathbb{E}_{\up{p}_{\text{te}}(x)}\varphi_{\ell}(\B{\mu},x,\alpha(x))$ since the function $\varphi_{\ell}$ does not depend on labels. 
If the marginals $\up{p}_\text{tr}(x)$, $\up{p}_\text{te}(x)$ are known, training samples can be used to obtain samples of the above gradient using \eqref{eq_2:reweighted_expectation}. This theorem is novel as we apply weights $\alpha$ and $\beta$ for covariate shift adaptation, even though the general form is analogous to the results in \cite{Mazuelas2022generalized,Mazuelas2023minimax}, which studies \ac{MRC} with train and test data sampled i.i.d. from the same distribution.

\textbf{Regularization.} 
The convex optimization problem \eqref{eq_3:CSMRC} carries out an L1-type regularization, where the regularization parameter is given by vector $\B{\lambda}$. 
The regularization term in \eqref{eq_3:CSMRC} allows to penalize each component of parameter $\B{\mu}$ differently, such that feature components with poorly estimated expectations (i.e.,  components $i$ with large $\lambda^{(i)}$) are strongly penalized.

\textbf{Classification rule.} 
The deterministic classifier associated with $\up{h}^{\sset{U}}$ classifies each instance with the label maximizing $\up{h}^{\sset{U}}(y|x)$. 
For both losses, this deterministic classifier is given by
 \vspace{-0.15cm}\begin{equation}
\begin{split}
        \arg\max_{y\in\set{Y}}\up{h}^{\sset{U}}(y|x)&=\arg\max_{y\in\set{\set{Y}}}\alpha(x)\Phi(x,y)^\text{T}\B{\mu}^*\\
        &=\arg\max_{y\in\set{\set{Y}}}\Phi(x,y)^\text{T}\B{\mu}^*.
    \end{split}
 \vspace{-0.1cm}
\end{equation}
Such deterministic classifiers, denoted by $\up{h}^{\sset{U}}_\text{d}$, allow us to classify testing samples even if we do not know the weights $\alpha(x)$ associated with them.

\textbf{Predictive confidence.} 
The values of $\alpha(x)$ adjust the confidence with which each sample is classified. 
For instance, for very small values of $\alpha(x)$, the classifier $\up{h}^{\sset{U}}$ uniformly assigns labels in the set $\set{Y}$ for both losses, i.e., \mbox{$\up{h}^{\sset{U}}(y|x)=1/|\set{Y}|$} for all $y\in\set{Y}$.

\textbf{Relation with existing approaches.} 
The general framework proposed above encompasses existing approaches, as detailed in Appendix~\ref{Appendix_1:Reweighted_Robust} for binary classification with log-loss. 
The usage of weights \mbox{$\alpha(x)=1$, $\beta(x)=\up{p}_{\text{te}}(x)/\up{p}_{\text{tr}}(x)$} leads to reweighted methods \cite{Sugiyama2012}, approximating $\mathbb{E}_{\up{p}_{\text{te}}(x)}\varphi_{\ell}(\B{\mu},x,\alpha(x))$ in \eqref{eq_3:CSMRC} using training instances.
The usage of weights \mbox{$\alpha(x)=\up{p}_{\text{tr}}(x)/\up{p}_{\text{te}}(x)$, $\beta(x)=1$} leads to robust methods \citep{Liu2014}, approximating the gradient of $\mathbb{E}_{\up{p}_{\text{te}}(x)}\varphi_{\ell}(\B{\mu},x,\alpha(x))$ in \eqref{eq_3:CSMRC} using training instances.
\vspace{-0.3cm}
\subsection{Generalization bounds}
\vspace{-0.2cm}
The following shows the generalization bounds of the proposed methods in Section~\ref{framework}.
Such bounds are given in terms of smallest minimax risk, $R^\infty$, that corresponds with the uncertainty set given by the exact expectations, and is defined by 
\begin{equation}\label{MRC_infty}
    R^{\infty}=\min_{\B{\mu}}-\mathbb{E}_{\up{p}_{\text{te}}}\Phi_{\alpha}(x,y)^\text{T}\B{\mu}+\mathbb{E}_{\up{p}_{\text{te}}(x)}\varphi_{\ell}(\B{\mu},x,\alpha(x)).
\end{equation}
The \ac{MRC} corresponding to that smallest minimax risk $R^\infty$ could only be obtained by an exact estimation of the expectation of the feature mapping $\Phi_{\alpha}$ that in turn would require an infinite amount of training samples.
The theorem below shows risk bounds for the proposed \acp{MRC} in terms of minimax risks $R(\set{U})$ and smallest minimax risks $R^{\infty}$.

\begin{theorem} \label{th_3:generalization_bound}	
    Let $\set{U}$ be a non-empty uncertainty set given by \eqref{uncertainty_set} and $\up{h}^{\sset{U}}$ be an $\ell$-\ac{MRC} for  $\set{U}$. 
    If $\B{\mu}^*$ and $\B{\mu}_\infty$ are solutions to \eqref{eq_3:CSMRC} and \eqref{MRC_infty}, respectively, then, we have that
    \begin{alignat}{2}
        R\left(\up{h}^{\sset{U}}\right)&\leq &&R(\set{U})+\left(\left|\B{\tau}-\mathbb{E}_{\up{p}_{\text{te}}}\Phi_{\alpha}(x,y)\right|-\B{\lambda}\right)^T\left|\B{\mu}^*\right|\label{eq_3:th_3_bound1}\\
	R\left(\up{h}^{\sset{U}}\right)&\leq &&R^{\infty}+\B{\lambda}^T(|\B{\mu}_\infty|-|\B{\mu}^*|)\notag\\
        & &&+\left|\B{\tau}-\mathbb{E}_{\up{p}_{\text{te}}}\Phi_{\alpha}(x,y)\right|^T\left|\B{\mu}_{\infty}-              
        \B{\mu}^*\right|.\label{eq_3:th_3_bound2}
    \end{alignat}
\end{theorem}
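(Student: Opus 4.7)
The plan is to start from the explicit form of the MRC classification rule given by Theorem~\ref{th_3:MRC} and turn the expected loss $R(\up{h}^{\sset{U}}) = \mathbb{E}_{\up{p}_{\text{te}}}\ell(\up{h}^{\sset{U}},(x,y))$ into an expression in $\B{\mu}^*$. For log-loss, \eqref{eq_3:h_equality_log} gives $-\log\up{h}^{\sset{U}}(y|x) = \varphi_{\log}(\B{\mu}^*,x,\alpha(x)) - \alpha(x)\Phi(x,y)^\text{T}\B{\mu}^*$ exactly; for 0-1-loss, \eqref{eq_3:h_equality_01} together with the $(\pun)_+$ clipping yields the pointwise inequality $1-\up{h}^{\sset{U}}(y|x) \leq \varphi_{01}(\B{\mu}^*,x,\alpha(x)) - \alpha(x)\Phi(x,y)^\text{T}\B{\mu}^*$ (with equality whenever the inner expression is nonnegative). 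Taking expectation under $\up{p}_{\text{te}}$ in either case produces the uniform bound
\begin{equation*}
R\bigl(\up{h}^{\sset{U}}\bigr) \leq -\mathbb{E}_{\up{p}_{\text{te}}}\Phi_\alpha(x,y)^\text{T}\B{\mu}^* + \mathbb{E}_{\up{p}_{\text{te}}(x)}\varphi_\ell(\B{\mu}^*,x,\alpha(x)).
\end{equation*}

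For the first bound \eqref{eq_3:th_3_bound1}, I would add and subtract $\B{\tau}^\text{T}\B{\mu}^*$ and $\B{\lambda}^\text{T}|\B{\mu}^*|$ on the right-hand side, recognise $R(\set{U})$ through \eqref{eq_3:R_l(U)}, and rewrite the remainder as $(\B{\tau}-\mathbb{E}_{\up{p}_{\text{te}}}\Phi_\alpha(x,y))^\text{T}\B{\mu}^* - \B{\lambda}^\text{T}|\B{\mu}^*|$. Bounding the inner product componentwise via $\V{a}^\text{T}\V{b}\leq |\V{a}|^\text{T}|\V{b}|$ then collects the two terms into $(|\B{\tau}-\mathbb{E}_{\up{p}_{\text{te}}}\Phi_\alpha(x,y)|-\B{\lambda})^\text{T}|\B{\mu}^*|$, which is the claimed bound.

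For the second bound \eqref{eq_3:th_3_bound2}, the key tool is the optimality of $\B{\mu}^*$ for problem \eqref{eq_3:CSMRC}, which gives
\begin{equation*}
-\B{\tau}^\text{T}\B{\mu}^* + F(\B{\mu}^*) + \B{\lambda}^\text{T}|\B{\mu}^*| \leq -\B{\tau}^\text{T}\B{\mu}_\infty + F(\B{\mu}_\infty) + \B{\lambda}^\text{T}|\B{\mu}_\infty|,
\end{equation*}
where $F(\B{\mu}) := \mathbb{E}_{\up{p}_{\text{te}}(x)}\varphi_\ell(\B{\mu},x,\alpha(x))$. Starting again from the display above, I would add and subtract $\B{\tau}^\text{T}\B{\mu}^*$ and $\B{\lambda}^\text{T}|\B{\mu}^*|$, apply this optimality inequality to the $\B{\mu}^*$-terms, then split $\B{\tau}=\mathbb{E}_{\up{p}_{\text{te}}}\Phi_\alpha(x,y)+(\B{\tau}-\mathbb{E}_{\up{p}_{\text{te}}}\Phi_\alpha(x,y))$ in the $\B{\mu}_\infty$-term to recover $R^\infty$ via \eqref{MRC_infty}. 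What remains is $(\B{\tau}-\mathbb{E}_{\up{p}_{\text{te}}}\Phi_\alpha(x,y))^\text{T}(\B{\mu}^*-\B{\mu}_\infty) + \B{\lambda}^\text{T}(|\B{\mu}_\infty|-|\B{\mu}^*|)$, and one more componentwise bound on the first summand finishes the proof.

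The mildly delicate step is the passage from $\up{h}^{\sset{U}}$ to the $\varphi_\ell$-expression for the 0-1-loss: one must verify that the clipping at zero in \eqref{eq_3:h_equality_01} yields an inequality rather than an equality, and that this inequality points in the correct direction, i.e.\ against the upper bound on $R(\up{h}^{\sset{U}})$ that we need. Everything after that is linear algebra (adding and subtracting terms, applying $\V{a}^\text{T}\V{b}\leq|\V{a}|^\text{T}|\V{b}|$ componentwise, and invoking the optimality of $\B{\mu}^*$ in \eqref{eq_3:CSMRC}); no concentration arguments or uniform-convergence machinery are needed because the bounds are deterministic given $\B{\tau}$.
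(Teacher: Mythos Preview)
Your proposal is correct and arrives at the same key inequality as the paper,
\[
R(\up{h}^{\sset{U}})\leq -\B{\tau}_\infty^{\text{T}}\B{\mu}^* + \mathbb{E}_{\up{p}_{\text{te}}(x)}\varphi_\ell(\B{\mu}^*,x,\alpha(x)),
\]
after which both proofs proceed identically (add/subtract $\B{\tau}^{\text{T}}\B{\mu}^*$ and $\B{\lambda}^{\text{T}}|\B{\mu}^*|$, recognise $R(\set{U})$ for \eqref{eq_3:th_3_bound1}, invoke optimality of $\B{\mu}^*$ and split $\B{\tau}=\B{\tau}_\infty+(\B{\tau}-\B{\tau}_\infty)$ for \eqref{eq_3:th_3_bound2}). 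The only difference is how that key inequality is reached: the paper first proves an auxiliary Lemma (Lemma~\ref{lemma_1}) that characterises $\max_{\up{p}\in\set{U}}\ell(\up{h},\up{p})$ for \emph{arbitrary} $\up{h}$ via Lagrange duality, then applies it with the uncertainty set $\set{U}_\infty$ built from the exact mean $\B{\tau}_\infty$; you instead bypass this lemma entirely by reading off the pointwise loss bound directly from the explicit form of $\up{h}^{\sset{U}}$ in Theorem~\ref{th_3:MRC}. Your route is more elementary and self-contained for this specific theorem, while the paper's lemma is a reusable tool that applies to any classification rule, not just the MRC.
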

\begin{proof}
    See Appendix~\ref{Appendix_2:Proofs_Sect3}.
\end{proof}
\vspace{-0.2cm}
Note that the minimax risk $R(\set{U})$ obtained at learning offers an upper bound for the $\ell$-risk if \mbox{$\B{\lambda}\succeq\left|\B{\tau}-\mathbb{E}_{\up{p}_{\text{te}}}\Phi_{\alpha}(x,y)\right|$} and an approximate upper bound for general $\B{\lambda}$. 
In addition, the difference between the risk $R\left(\up{h}^{\sset{U}}\right)$ and the smallest minimax risk $R^{\infty}$ decreases with the estimation error \mbox{$|\B{\tau}-\mathbb{E}_{\up{p}_{\text{te}}}\Phi_{\alpha}(x,y)|$}.

We next show how the proposed methods can lead to a significant increase in effective size compared with reweighted methods. 

\begin{corollary}\label{cor3.3}
    Let $\set{U}$ be a non-empty uncertainty set given by \eqref{uncertainty_set} with $\B{\lambda}=0$, and $\up{h}^{\sset{U}}$ be an $\ell$-\ac{MRC} for $\set{U}$. 
    If weights $\alpha(x)$ and $\beta(x)$ are given by \eqref{eq_3:alphabeta_sol} with $C=B/\sqrt{D}$ for $D\geq 1$ and
    \vspace{-0.1cm}
    \begin{align}\label{B_constant}
        B=\sup_{x\in\set{X}}\up{p}_{\text{te}}(x)/\up{p}_{\text{tr}}(x).\vspace{-0.1cm}
    \end{align}
    Then, with probability at least $1-\delta$ we have that 
    \begin{align}
        R(\up{h}^{\sset{U}})\leq R^{\infty}+M\|\B{\mu}_{\infty}-\B{\mu}^*\|_\infty \sqrt{2\frac{B^2}{Dn}\log\frac{2}{\delta}}
    \end{align}
    where $M$ is a constant satisfying $\|\Phi(x,y)\|_\infty\leq M$ for all $x\in\set{X}$, $y\in\set{Y}$.
\end{corollary}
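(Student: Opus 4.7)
The plan is to specialise the second bound of Theorem~\ref{th_3:generalization_bound} to $\B{\lambda} = \V{0}$ and then control the remaining empirical-fluctuation term by Hoeffding's inequality, exploiting the boundedness of $\beta(x)$ that is built into the choice \eqref{eq_3:alphabeta_sol} with $C = B/\sqrt{D}$.

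First, I would substitute $\B{\lambda} = \V{0}$ into \eqref{eq_3:th_3_bound2}. This removes the $\B{\lambda}^T(|\B{\mu}_\infty| - |\B{\mu}^*|)$ term and yields
\begin{equation*}
R(\up{h}^{\sset{U}}) \leq R^{\infty} + \bigl|\B{\tau} - \mathbb{E}_{\up{p}_{\text{te}}}\Phi_{\alpha}(x,y)\bigr|^T \bigl|\B{\mu}_\infty - \B{\mu}^*\bigr|,
\end{equation*}
so the problem reduces to controlling a single sample-dependent inner product. Then I would apply a H\"older-type bound,
\begin{equation*}
\bigl|\B{\tau} - \mathbb{E}_{\up{p}_{\text{te}}}\Phi_{\alpha}(x,y)\bigr|^T \bigl|\B{\mu}_\infty - \B{\mu}^*\bigr| \leq \|\B{\mu}_\infty - \B{\mu}^*\|_\infty \cdot \bigl\|\B{\tau} - \mathbb{E}_{\up{p}_{\text{te}}}\Phi_{\alpha}(x,y)\bigr\|_1,
\end{equation*}
which cleanly separates the parameter-difference factor that appears in the corollary's statement from the empirical estimation error.

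The main step is then to bound the remaining random quantity. By construction $\B{\tau} = \tfrac{1}{n}\sum_{i=1}^n \Phi_\beta(x_i,y_i)$ is an i.i.d.\ average, and the double-weighting identity \eqref{eq_3:dw_expectation} gives $\mathbb{E}_{\up{p}_{\text{tr}}}\Phi_\beta(x,y) = \mathbb{E}_{\up{p}_{\text{te}}}\Phi_\alpha(x,y)$, so each $\Phi_\beta(x_i,y_i)$ is centred at the target vector. The specific choice \eqref{eq_3:alphabeta_sol} with $C = B/\sqrt{D}$ ensures $\beta(x) \leq B/\sqrt{D}$, so every entry of $\Phi_\beta(x_i, y_i)$ is bounded in absolute value by $MB/\sqrt{D}$. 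Applying Hoeffding's inequality to the empirical mean therefore gives, with probability at least $1-\delta$,
\begin{equation*}
\bigl\|\B{\tau} - \mathbb{E}_{\up{p}_{\text{te}}}\Phi_{\alpha}(x,y)\bigr\|_1 \leq M\sqrt{\frac{2B^2}{Dn}\log\frac{2}{\delta}},
\end{equation*}
and chaining this with the two previous displays yields the claimed bound.

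The main obstacle is precisely this last step: producing the stated dimension-free expression $M\sqrt{2B^2/(Dn)\log(2/\delta)}$ rather than a coordinate-wise bound that would pick up extra factors in the feature-mapping dimension $m$. I expect this to require either a vector-valued Hoeffding inequality in the appropriate $\ell_1$--$\ell_\infty$ norm pair, or applying concentration directly to the scalar random variable $\mathrm{sgn}(\B{\mu}_\infty - \B{\mu}^*)^T\Phi_\beta(x_i,y_i)$ (with signs chosen to match the dominant direction of the deviation), and then absorbing any residual dimensional factors into the constant $M$. Once the concentration step is handled, the corollary follows by direct substitution.
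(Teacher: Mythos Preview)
Your approach is exactly the paper's: the authors' proof is the single sentence ``A direct consequence of Theorem~\ref{th_3:generalization_bound} and Hoeffding's inequality,'' and your proposal unpacks precisely that---set $\B{\lambda}=\V{0}$ in \eqref{eq_3:th_3_bound2}, pull out $\|\B{\mu}_\infty-\B{\mu}^*\|_\infty$, and apply Hoeffding using the uniform bound $\beta(x)\leq B/\sqrt{D}$ from \eqref{eq_3:alphabeta_sol}. Your closing concern about the dimension-free constant is well spotted: the paper does not address it either, and taken literally the stated bound omits an $m$- or $\log m$-dependent factor (since $\B{\mu}^*$ is data-dependent the sign-vector trick you suggest does not directly apply); this is a looseness in the paper's statement rather than a defect in your argument.
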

\vspace{-0.4cm}
\begin{proof}
    A direct consequence of Theorem~\ref{th_3:generalization_bound} and Hoeffding's inequality.
\end{proof}
\vspace{-0.2cm}
As described in \cite{Cortes2010,Yu2012}, reweighted methods have an estimation error of the order $\sqrt{2\frac{B^2}{n}\log\frac{2}{\delta}}$ so that the methods proposed can achieve an effective sample size increased by a factor of $D$ using the double-weighting given by \eqref{eq_3:alphabeta_sol} with \mbox{$C=B/\sqrt{D}$}. 
The next section more broadly discusses such an increase in effective sample size and the corresponding trade-off for predictions' confidence.
\vspace{-0.2cm}
\subsection{Choice of weight functions}\label{sec_3.4}
\vspace{-0.1cm}
Existing reweighted and robust methods, as well as the proposed general framework, utilize weights $\alpha(x)$ and $\beta(x)$ in the estimation of expectations:
\vspace{-0.1cm}
\begin{equation}\label{eq_3.2:expectation_estimate}
    \frac{1}{n}\sum_{i=1}^n\beta(x_i)f(x_i,y_i)\approx\mathbb{E}_{\up{p}_{\text{te}}}\alpha(x)f(x,y).
    \vspace{-0.1cm}
\end{equation}
The error of such estimates is determined by the weights $\beta(x)$. 
If $\alpha(x)$ and $\beta(x)$ satisfy \mbox{$\alpha(x)\up{p}_{\text{te}}(x)=\beta(x)\up{p}_{\text{tr}}(x)$}, using Hoeffding's inequality we have that
\vspace{-0.1cm}
\begin{alignat}{1}\label{eq_3.2:bound}
    \biggl|\frac{1}{n}\sum_{i=1}^n\beta(x_i)f(x_i,y_i)&-\mathbb{E}_{\up{p}_{\text{te}}}\alpha(x)f(x,y)\biggr|\notag\\
    &\leq||f||_\infty\sqrt{2\frac{||\beta||^2_\infty}{n}\log\frac{2}{\delta}}\vspace{-0.2cm}
\end{alignat} 
with probability at least $1-\delta$.

In particular, for reweighted methods the bound \eqref{eq_3.2:bound} becomes $||f||_\infty\sqrt{2\frac{B^2}{n}\log\frac{2}{\delta}}$ with $B$ given by \eqref{B_constant} as shown in \cite{Cortes2010,Yu2012}. 

The error in the expectations estimates in \eqref{eq_3.2:expectation_estimate} decreases when we choose the weights $\alpha(x)$ adequately. 
In particular, using small values of $\alpha(x)$ we can achieve \mbox{$\alpha(x)\up{p}_{\text{te}}(x)=\beta(x)\up{p}_{\text{tr}}(x)$} with moderate values of $\beta(x)$.
Such improvement comes at the expense of using classification rules with significant confidence only in the subregion of $\set{X}$ in which $\alpha(x)$ is significantly larger than $0$.

The above trade-off between error in expectations estimates and confidence of classification rules can be addressed using pairs of weights of the form \eqref{eq_3:alphabeta_sol} and varying the value of $C$.
For values $C\geq B$, $\alpha(x)=1$ and $\beta(x)=\up{p}_{\text{te}}(x)/\up{p}_{\text{tr}}(x)$ that corresponds to the reweighted approach. 
For values $C<B$, the expectations' estimates improve as we decrease $C$ since $\|\beta\|_\infty=C$. 
However, the corresponding classification rules would only predict with significant confidence in the subregion of $\set{X}$ where $\alpha(x)$ is significantly larger than $0$. Such subregion shrinks when $C$ decreases because it is composed by the $x\in\set{X}$ where $\up{p}_{\text{te}}(x)$ is not significantly larger than $C\up{p}_{\text{tr}}(x)$.
In the following, we present methods that obtain weights $\alpha$ and $\beta$ addressing the above trade-off, and generalize conventional \ac{KMM} methods.
\vspace{-0.2cm}
\section{Double-weighting Kernel Mean Matching} \label{Section_4}
\vspace{-0.1cm}
The \ac{KMM} method obtains weights $\B{\beta}\in\mathbb{R}^n$ for $n$ training instances $x_1,x_2,\ldots,x_n$ using $t$ testing instances $x_{n+1},x_{n+2},\ldots,x_{n+t}$ \citep{Huang2006,Gretton2009}. 
We propose the \ac{DW-KMM} method that  obtains weights $\B{\beta}\in\mathbb{R}^n$ for the $n$ training instances together with weights $\B{\alpha}\in\mathbb{R}^t$ for the $t$ testing instances by solving the optimization problem
\vspace{-0.2cm}\begin{alignat}{1}\label{eq_4:DW-KMMempirical}
    \min_{\substack{\B{\alpha},\B{\beta}}} \  \  & \Bigg\| \frac{1}{t}\sum_{i=1}^{t}\alpha^{(i)}K(x_{n+i})-\frac{1}{n}\sum_{i=1}^n\beta^{(i)}K(x_i) \Bigg\| ^2_\mathcal{H}\nonumber\\\mbox{\hspace{-0.2cm}s.t. }\  &0\leq\beta^{(i)}\leq B/\sqrt{D},\text{ for }i=1,\ldots,n\notag\\
    &0\leq \alpha^{(i)}\leq1,\text{ for }i=1,\ldots,t\notag\\   
    &\bigg|\frac{1}{n}\sum_{i=1}^n\beta^{(i)}-\frac{1}{t}\sum_{i=1}^{t}\alpha^{(i)}\bigg|\leq \epsilon\notag\\
    &\left| \left|\B{\alpha}-\mathbf{1} \right| \right|\leq \left(1-\frac{1}{\sqrt{D}} \right) \sqrt{t}
\end{alignat}
where $K:\set{X}\longrightarrow\set{H}$ is a feature map corresponding with a \ac{RKHS} $\set{H}$ with kernel $k(x,\bar{x})=\langle K(x),K(\bar{x})\rangle_{\set{H}}$.

As described above, the hyperparameter $D\geq1$ in \eqref{eq_4:DW-KMMempirical} balances the trade-off between error in expectation estimates and confidence of the classification.
For $D=1$, the optimization problem becomes that of \ac{KMM} for reweighted methods \citep{Huang2006,Gretton2009}.

\textbf{Performance guarantees.} 
The proposed approach is an empirical version of the following (population) problem given by exact expectations
\begin{align}\label{eq_4:GKMM_pop}
\vspace{-0.3cm}
        \min_{\alpha(x),\beta(x)}\  &\bigg\|\mathbb{E}_{\up{p}_{\text{te}}(x)}\alpha(x)K(x)-\mathbb{E}_{\up{p}_{\text{tr}}(x)}\beta(x)K(x)\bigg\|^2_{\set{H}}\nonumber\\
        \text{s.t}\  \  \  \  \  \  &0\leq\beta(x)\leq B/\sqrt{D},\  0\leq\alpha(x)\leq1,\  \forall x\in\set{X}\nonumber\\
        &\mathbb{E}_{\up{p}_{\text{te}}(x)}\alpha(x)=\mathbb{E}_{\up{p}_{\text{tr}}(x)}\beta(x)\nonumber\\
        &\mathbb{E}_{\up{p}_{\text{te}}(x)}\left\lbrace(\alpha(x)-1)^2\right\rbrace\leq \left(1-1/\sqrt{D} \right)^2.
        \vspace{-0.1cm}
\end{align}
The minimum value of \eqref{eq_4:GKMM_pop} is zero since \eqref{eq_3:alphabeta_sol} with \mbox{$C=B/\sqrt{D}$} is a feasible solution. 
Then, solutions of \eqref{eq_4:GKMM_pop}, $\hat{\beta}(x)$, $\hat{\alpha}(x)$, provide consistent estimators of expectations because 
\begin{equation}\label{eq_4:consistent}
     \mathbb{E}_{\up{p}_{\text{te}}(x,y)}\hat{\alpha}(x)\Phi(x,y)=\mathbb{E}_{\up{p}_{\text{tr}}(x,y)}\hat{\beta}(x)\Phi(x,y)
\end{equation}
is satisfied if the kernel $k$ is characteristic or if $\mathbb{E}_{\up{p}_{\text{te}}(y|x)}\Phi(x,y)$ belongs to $\set{H}$, analogously as shown in \cite{Yu2012}.

With finite samples, the following theorem shows bounds for the difference between the empirical means in feature space for solutions of \eqref{eq_4:GKMM_pop}.

\begin{theorem}\label{th_4:bound_GKMM}
    If $\hat{\beta}(x)$ and $\hat{\alpha}(x)$ are solutions of \eqref{eq_4:GKMM_pop},  with probability at least $1-\delta$ we have that
    \begin{alignat}{1}
        \Bigg| \Bigg|\frac{1}{n}\sum_{i=1}^n\hat{\beta}(x_i&)K(x_i)-\frac{1}{t}\sum_{i=1}^{t}\hat{\alpha}(x_{n+i})K(x_{n+i}) \Bigg| \Bigg|_\mathcal{H}\notag\\
        &\leq \left(1+\sqrt{2\log\frac{2}{\delta}}\right)  \kappa\sqrt{\left(\frac{B^2}{Dn}+\frac{1}{t} \right) }
    \end{alignat}
    where the constant $\kappa$ satisfies $|k(x,x)|\leq \kappa^2$ for all $x\in\set{X}$.
\end{theorem}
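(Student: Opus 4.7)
The plan is to mirror the Hilbert-space concentration argument used in the original analysis of \ac{KMM} \citep{Gretton2009}, now applied jointly to the weighted training and weighted testing empirical means. Write
\[
W:=\frac{1}{n}\sum_{i=1}^n\hat{\beta}(x_i)K(x_i)-\frac{1}{t}\sum_{i=1}^t\hat{\alpha}(x_{n+i})K(x_{n+i}).
\]
Because $(\hat{\alpha},\hat{\beta})$ attains the minimum value $0$ of \eqref{eq_4:GKMM_pop} (as observed just above the theorem), the equality $\mathbb{E}_{\up{p}_{\text{te}}(x)}\hat{\alpha}(x)K(x)=\mathbb{E}_{\up{p}_{\text{tr}}(x)}\hat{\beta}(x)K(x)$ holds, so $\mathbb{E}[W]=0$. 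Crucially, $\hat{\alpha}$ and $\hat{\beta}$ are population quantities, independent of the drawn samples, so $\|W\|_{\set{H}}$ is a deterministic functional of $n+t$ independent inputs.

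First I would bound $\mathbb{E}\|W\|_{\set{H}}$. Expanding the squared norm, using independence and zero-mean of the centered summands, together with the pointwise bounds $\|\hat{\beta}(x)K(x)\|_{\set{H}}\le(B/\sqrt{D})\kappa$ and $\|\hat{\alpha}(x)K(x)\|_{\set{H}}\le\kappa$ (implied by the box constraints of \eqref{eq_4:GKMM_pop} and $k(x,x)\le\kappa^2$),
\[
\mathbb{E}\|W\|_{\set{H}}^2\le\frac{(B/\sqrt{D})^2\kappa^2}{n}+\frac{\kappa^2}{t}=\kappa^2\left(\frac{B^2}{Dn}+\frac{1}{t}\right).
\]
Jensen's inequality then yields $\mathbb{E}\|W\|_{\set{H}}\le\kappa\sqrt{B^2/(Dn)+1/t}$.

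Next I would apply McDiarmid's bounded-differences inequality to the map $(x_1,\ldots,x_{n+t})\mapsto\|W\|_{\set{H}}$. Swapping a single training sample changes $\|W\|_{\set{H}}$ by at most $2(B/\sqrt{D})\kappa/n$, while swapping a test sample changes it by at most $2\kappa/t$, so the sum of squared deviations equals $4\kappa^2(B^2/(Dn)+1/t)$. The two-sided form of McDiarmid then gives, with probability at least $1-\delta$,
\[
\|W\|_{\set{H}}\le\mathbb{E}\|W\|_{\set{H}}+\kappa\sqrt{2\log(2/\delta)\left(\frac{B^2}{Dn}+\frac{1}{t}\right)},
\]
and combining with the expectation bound from the previous paragraph delivers the claimed inequality.

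The main subtlety is conceptual rather than computational: one must keep straight that $\hat{\alpha}$ and $\hat{\beta}$ refer to solutions of the \emph{population} problem \eqref{eq_4:GKMM_pop}, and not of the empirical problem \eqref{eq_4:DW-KMMempirical} in which they would depend on $x_1,\ldots,x_{n+t}$. It is precisely this independence that legitimates both the variance computation above and the fixed bounded-differences constants used in McDiarmid. Once this is kept in view, the proof is a direct transposition of the standard empirical-mean concentration in an \ac{RKHS} to the two-sample, double-weighted setting.
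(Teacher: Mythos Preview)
Your proof is correct and follows essentially the same approach as the paper's: both arguments bound the expectation of $\|W\|_{\set{H}}$ via the second moment and then apply the bounded-differences (McDiarmid) inequality in Hilbert space, exactly in the spirit of Example~6.3 in \cite{Boucheron2013}. If anything, your write-up is slightly more careful in making explicit that $\mathbb{E}[W]=0$ (because the population objective vanishes at $(\hat{\alpha},\hat{\beta})$), which cleanly justifies dropping the cross terms in the variance computation; the paper invokes this implicitly under the phrase ``by independence.''
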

\vspace{-0.2cm}
\begin{proof}
    See Appendix \ref{Appendix_3:Proofs_Sect4}.
\end{proof}
\vspace{-0.2cm}
\textbf{Relation with conventional \ac{KMM}.} 
The solutions $\hat{\beta}(x)$ for conventional \ac{KMM} in reweighted methods satisfy with probability at least $1-\delta$
    \begin{alignat}{1}
        \Bigg| \Bigg|\frac{1}{n}\sum_{i=1}^n\hat{\beta}(x_i)&K(x_i)-\frac{1}{t}\sum_{i=1}^{t}K(x_{n+i}) \Bigg| \Bigg|_\mathcal{H}\notag\\
        &\leq \left(1+\sqrt{2\log\frac{2}{\delta}}\right)  \kappa\sqrt{\left(\frac{B^2}{n}+\frac{1}{t} \right) }
    \end{alignat}
 as shown in Lemma 4 of \citep{Huang2006} and equation (10) in \citep{Yu2012}. 
 Therefore, the proposed \ac{DW-KMM} allows to significantly improve the effective sample by exploiting the usage of weights $\alpha$. 
 Analogously to the results shown in Section~\ref{sec_3.4}, the effective sample size of the methods proposed is $D$ times larger than that of existing \ac{KMM} for reweighted methods.
\vspace{-0.2cm}
\section{Practical Algorithm}\label{Section_5}
\vspace{-0.1cm}
In this section, we present a practical algorithm for the proposed Double-Weighting for General Covariate Shift \mbox{(DW-GCS)}, detailed in Algorithm~\ref{alg:methodology}.  
We first compute weights $\B{\alpha}$ and $\B{\beta}$ by solving \eqref{eq_4:DW-KMMempirical}, then, we learn the classifier's parameters by solving \eqref{eq_3:CSMRC} using mean vector $\B{\tau}$ defined in \eqref{eq_3:tau} and confidence vector $\B{\lambda}$.

\begin{algorithm}
	\caption{The proposed algorithm: DW-GCS}
    \label{alg:methodology}
	\begin{algorithmic}[1]
        \REQUIRE        
		\hspace{0.25cm}Training samples $(x_1,y_1),(x_2,y_2),\ldots,(x_n,y_n)$
		
        \hspace{0.8cm}Testing instances $x_{n+1},x_{n+2},\ldots,x_{n+t}$, $D$
        \ENSURE
        Weights  $\hat{\B{\beta}}$ and  $\hat{\B{\alpha}}$\\
          \hspace{0.8cm}Classifier parameters  $\B{\mu}^*$, Minimax risk $R(\set{U})$	
        \STATE $\hat{\B{\beta}},\hat{\B{\alpha}} \gets \text{solution of \eqref{eq_4:DW-KMMempirical}}$
		\STATE $\B{\tau} \gets \frac{1}{n}\sum_{i=1}^n\hat{\beta}^{(i)}\Phi(x_i,y_i)$	
		\STATE $\B{\lambda} \gets \text{solution of \eqref{eq_5:optprob_lambda}}$
		\STATE $\B{\mu}^* \gets \text{solution of \eqref{eq_5:CSMRC_empirical}}$ using \eqref{varphi_01} for 0-1-loss, and \eqref{varphi_log} for log-loss
		\STATE $R(\set{U})\gets -\B{\tau}^\text{T}\B{\mu}^*+\frac{1}{t}\overset{t}{\underset{i=1}{\sum}}\varphi_{\ell}(\B{\mu}^*,x_{n+i},\hat{\alpha}^{(i)})+\B{\lambda}^\text{T}|\B{\mu}^*|$	
	\end{algorithmic} 
\end{algorithm}

\textbf{Computing weights and learning \acp{MRC}.}
Weights $\B{\alpha}$ and $\B{\beta}$ are computed solving the convex optimization \eqref{eq_4:DW-KMMempirical}, which is a quadratic problem as detailed in Appendix~\ref{Appendix_4:Quadratic_DKMM}.

The optimization in \eqref{eq_3:CSMRC} can be addressed by approximating the expectation by means of the $t$ instances in testing $x_{n+1},x_{n+2},\ldots,x_{n+t}$ as
\begin{equation}
	\label{eq_5:CSMRC_empirical}
	\min_{\B{\mu}}-\B{\tau}^T\B{\mu}+\frac{1}{t}\sum_{i=1}^{t}\varphi_{\ell}(\B{\mu},x_{n+i},\alpha^{(i)})+\B{\lambda}^T|\B{\mu}|
\end{equation}
that is an unconstrained convex optimization problem and can be efficiently solved by conventional methods.

\textbf{Hyperparameters.}
In principle, both hyperparameters $\B{\lambda}$ and $D$ can be obtained by cross-validation. 
However, standard cross-validation is not valid  under covariate shift \citep{sugiyama2007}. 
We hence avoid cross-validation and determine both parameters as follows.

As detailed in Section~\ref{sec_3.4}, the hyperparameter $D$ serves to address the trade-off between error in expectation estimates and confidence of classification rules. 
For instance, values of $D$ close to 1 can be effective in situations with a large number of samples while higher values of $D$ can be effective with a reduced number of samples. 
This is shown by the theoretical results in the paper, since the estimation error in the proposed methods is of the order $\set{O}(1/\sqrt{Dn})$, as described by the performance bounds in Corollary~\ref{cor3.3} and Theorem~\ref{th_4:bound_GKMM}. 

In practice, we propose to select the hyperparameter $D$ taking advantage of the minimax risk provided at the learning stage by the methods presented. 
Specifically, we select the value of $D$ to achieve the lowest minimax risk over a certain range $D\geq 1$. 
Note that, as described in Theorem~\ref{th_3:generalization_bound}, the minimax risk $R(\set{U})$ obtained at learning offers an upper bound for the risk if $\B{\lambda}\succeq\left|\B{\tau}-\mathbb{E}_{\up{p}_{\text{te}}}\Phi_{\alpha}(x,y)\right|$, and an approximate upper bound for general $\B{\lambda}$. 
Therefore, the proposed selection method in the paper uses the value of $D$ that results in the lowest upper bound over a range of values for $D$. 
Appendix \ref{Appendix_5:Experiments} further illustrates the adequacy of such approach in practice.

The second hyperparameter $\B{\lambda}$ is determined solving
\begin{align}\label{eq_5:optprob_lambda}
    \min_{\up{p},\B{\lambda}}\  \  &  \mathbf{1}^\text{T}\B{\lambda}\nonumber\\
   \mbox{s.t.}\  \  & \B{\tau}-\B{\lambda}\preceq\sum_{i=1}^{t}\sum_{y\in\set{Y}}\up{p}(y|x_{n+i})\Phi_\alpha(x_{n+i},y)\preceq\B{\tau}+\B{\lambda}\notag\\
    & \B{\lambda},\B{\up{p}}\succeq\B{0}\notag\\
    & \sum_{y\in\set{Y}}\up{p}(y|x_{n+i})=1/t\text{ for }i=1,\ldots,t
\vspace{-0.1cm}
\end{align}
that ensures the uncertainty set used is non-empty.

\textbf{Complexity and implementation without testing instances.}
The computational complexity of the methods proposed is similar to existing methods for covariate shift adaptation. 
Specifically, the step for \ac{DW-KMM} that obtains weights has a similar complexity as that for conventional \ac{KMM}. 
The main difference is that \eqref{eq_4:DW-KMMempirical} has $t$ additional variables and $t+1$ additional constraints corresponding to the weights $\B{\alpha}$. 
The step that obtains the classifier parameters solving convex optimization problem \eqref{eq_3:CSMRC} has the same complexity as that for conventional methods. 
Finally, the step that determines hyperparameters not only avoids the usage of cross-validation but can also reduce complexity. 
In particular, cross-validation with $P$ partitions would require solving \eqref{eq_3:CSMRC} $P$ times for each candidate value for hyperparameters, while the methods proposed only require solving \eqref{eq_5:optprob_lambda} and \eqref{eq_3:CSMRC} once, for each candidate value of $D$.

Algorithm~\ref{alg:methodology} details the implementation of \mbox{DW-GCS} in cases where testing instances are available at training.
The methods proposed can be implemented with small modifications in cases where only training instances are available and the marginals (or their ratios) are known. 
In these cases, weights $\alpha(x)$ and $\beta(x)$ can be determined using \eqref{eq_3:alphabeta_sol} with $C=B/\sqrt{D}$ instead of solving \eqref{eq_4:DW-KMMempirical}, and optimization \eqref{eq_3:CSMRC} can be addressed using the training instances instead of testing instances making use of equality \eqref{eq_2:reweighted_expectation}.
\vspace{-0.2cm}
\section{Experiments}\label{Section_6:Experiments}
\vspace{-0.1cm}
This section shows experimental results for the proposed approach in comparison with existing methods on synthetic and real datasets. 
Reweighted and robust approaches are implemented as in \cite{sugiyama2007,Liu2014} and described in Appendix~\ref{Appendix_1:Reweighted_Robust}, the flattening method is implemented as in \cite{Shimodaira2000}, the RuLSIF is implemented as in \cite{Yamada2011}, the \ac{KMM} method is implemented as in \cite{Huang2006}, and the methods proposed are implemented as described in Alg.~\ref{alg:methodology}. 
The source code for the methods presented is publicly available in the library \mbox{MRCpy} \cite{Bondugula2023} and the experimental setup in \url{https://github.com/MachineLearningBCAM/MRCs-for-Covariate-Shift-Adaptation}.

For existing methods, the regularization parameter has been fine-tuned as shown in Appendix~\ref{Appendix_5:Experiments}.  
For the proposed methods, hyperparameters are obtained as described in Section~\ref{Section_5}. 
The results in this section are complemented by those in Appendix~\ref{Appendix_5:Experiments} that provide further implementation details and experimental results. 
In particular, the appendix shows that selecting the hyperparameter $D$ with lowest minimax risk results in performances near those obtained with the best value for $D$ by grid search.

\begin{figure}[ht]
    \centering
    \psfrag{A}[][][0.7]{Parameter $\delta$}
    \psfrag{B}[][][0.7]{Classification error}
    \psfrag{A_____________}[Bl][Bl][0.7]{ No adapt.}
    \psfrag{_A____________}[Bl][Bl][0.7]{ Reweighted}
    \psfrag{__A___________}[Bl][Bl][0.7]{ Robust}
    \psfrag{___A__________}[Bl][Bl][0.7]{ DW-GCS}
    \psfrag{0}[][][0.5]{0}
    \psfrag{0.1}[][][0.5]{0.1}
    \psfrag{0.2}[][][0.5]{}
    \psfrag{0.3}[][][0.5]{0.3}
    \psfrag{0.4}[][][0.5]{}
    \psfrag{0.5}[][][0.5]{0.5}
    \psfrag{0.6}[][][0.5]{}
    \psfrag{0.7}[][][0.5]{}
    \psfrag{0.8}[][][0.5]{}
    \psfrag{0.9}[][][0.5]{}
    \psfrag{1}[][][0.5]{}
    \psfrag{2}[][][0.5]{0.05}
    \psfrag{4}[][][0.5]{0.1}
    \psfrag{6}[][][0.5]{0.2}
    \psfrag{8}[][][0.5]{0.35}
    \psfrag{10}[][][0.5]{0.4}
    \psfrag{12}[][][0.5]{0.45}
    \includegraphics[width=0.5\textwidth]{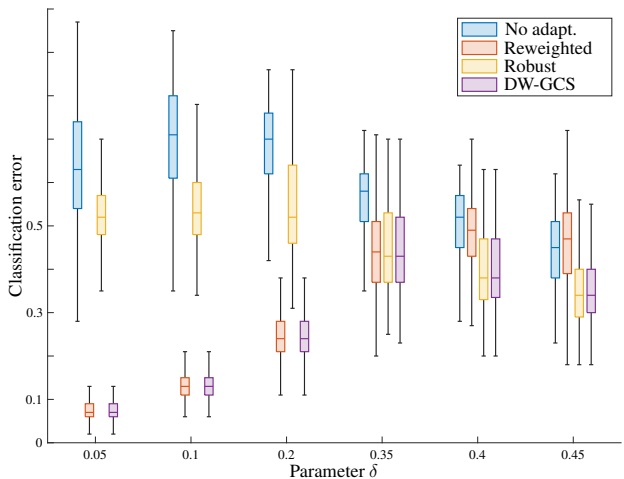}
    \caption{Classification error for different types of covariate shift. In the case $\delta=0.05$, the training support contains that at testing, while in the case $\delta=0.45$ we have the opposite.}
    \label{Synthetic_Experiment}
    \vspace{-5pt}
\end{figure}

\textbf{Experiments with synthetic data.}
In the first set of results we show how the proposed approach can achieve covariate shift adaptation in situations where existing methods are challenged.
For such results, the training and testing samples are drawn from distributions
\vspace{-0.1cm}
\begin{alignat}{1}
    \up{p}_{\text{tr}}(x)&=(0.5-\delta)N(\V{m}_1,\B{\Sigma}_1)+(0.5+\delta)N(\V{m}_2,\B{\Sigma}_2)\notag\\
    \up{p}_{\text{te}}(x)&=(1-\delta)N(\V{m}_1,\B{\Sigma}_1)+\delta N(\V{m}_2,\B{\Sigma}_2)
 \end{alignat}
with $\V{m}_1=[-3/2,0]^\text{T}$, $\V{m}_2=[3/2,0]^{\text{T}}$, $\B{\Sigma}_1=\B{\Sigma}_2=(1/4)\V{I}$, and labels are $y=1$ if $x^{(1)}x^{(2)}\geq 0$ and $y=2$ otherwise. 
We use values \mbox{$\delta\in\{0.05,0.1,0.2,0.35,0.4,0.45\}$} to simulate different relations between the marginals of training and testing instances. 
We utilize the non-linear feature mapping given by instances components and their products and implement existing and proposed methods using the exact marginals. 
In addition, for each type of covariate shift (value of $\delta$) we carry out 1,000 random repetitions with 100 training and testing samples.

\textbf{Results.} 
Figure~\ref{Synthetic_Experiment} shows box-plots corresponding to the classification error of exiting and proposed approaches in comparison to that obtained without covariate shift adaptation ($\alpha(x)=\beta(x)=1$).
The results in the figure show how reweighted (resp. robust) methods obtain poor performances in situations where the support of training (resp. testing) instances does not contain that of testing (resp. training) instances.
On the other hand, the methods proposed can leverage the presented double-weighting approach and adapt to more general covariate shifts. 

\begin{table*}[ht]
\small
\centering
\caption{Classification errors in 21 scenarios show that the proposed methods can more adequately adapt to general covariate shift.  Values in bold show best classification error in each scenario.}
\vspace{0.1cm}
\begin{tabular}{lccccccc}
\toprule
Datasets & Reweighted & Flattening & RuLSIF & Robust & KMM & DW-GCS 0-1 & DW-GCS log \\ 
\midrule
\textbf{Blood}&&&&&&&\\
Feature 1 & $.55\pm.08$ & $.48\pm.11$ & $\V{.29\pm.04}$ & $.34\pm.06$ & $.32\pm.03$ & $.30\pm.03$ & $.31\pm.03$ \\ 
Feature 2 & $.39\pm.03$ & $.38\pm.03$ & $.39\pm.03$ & $.40\pm.03$ & $.39\pm.04$ & $\V{.38\pm.05}$ & $\V{.38\pm.05}$ \\ 
Feature 3 & $.43\pm.05$ & $.41\pm.05$ & $.36\pm.04$ & $.39\pm.04$ & $.36\pm.04$ & $\V{.34\pm.03}$ & $.35\pm.03$ \\ 
PCA & $.48\pm.05$ & $.48\pm.05$ & $.29\pm.05$ & $.44\pm.05$ & $.30\pm.05$ & $\V{.28\pm.04}$ & $\V{.28\pm.04}$ \\ 
\hline 
\noalign{\vspace{0.022cm}} 
\textbf{BreastCancer}&&&&&&&\\
Feature 1 & $.05\pm.02$ & $.05\pm.03$ & $.05\pm.02$ & $.06\pm.03$ & $.06\pm.02$ & $\V{.04\pm.02}$ & $\V{.04\pm.02}$ \\ 
Feature 2 & $.06\pm.02$ & $.05\pm.02$ & $.06\pm.03$ & $.07\pm.03$ & $.06\pm.03$ & $\V{.04\pm.02}$ & $\V{.04\pm.02}$ \\ 
Feature 3 & $.05\pm.02$ & $.05\pm.02$ & $.05\pm.02$ & $.06\pm.03$ & $.05\pm.02$ & $\V{.04\pm.02}$ & $\V{.04\pm.02}$ \\ 
PCA & $.03\pm.01$ & $.03\pm.01$ & $.03\pm.01$ & $.03\pm.01$ & $.03\pm.01$ & $\V{.02\pm.01}$ & $\V{.02\pm.01}$ \\ 
\hline 
\noalign{\vspace{0.022cm}} 
\textbf{Haberman}&&&&&&&\\
Feature 1 & $.48\pm.07$ & $.47\pm.08$ & $.31\pm.06$ & $.41\pm.09$ & $.34\pm.10$ & $\V{.28\pm.07}$ & $.29\pm.06$ \\ 
Feature 2 & $.46\pm.08$ & $.44\pm.08$ & $.31\pm.06$ & $.39\pm.08$ & $.36\pm.10$ & $\V{.29\pm.08}$ & $.30\pm.07$ \\ 
Feature 3 & $\V{.33\pm.05}$ & $\V{.33\pm.05}$ & $\V{.33\pm.05}$ & $.36\pm.06$ & $.42\pm.08$ & $.35\pm.07$ & $.36\pm.06$ \\ 
PCA & $.43\pm.12$ & $.42\pm.12$ & $\V{.29\pm.05}$ & $.42\pm.11$ & $.35\pm.08$ & $.30\pm.08$ & $.31\pm.07$ \\ 
\hline 
\noalign{\vspace{0.022cm}} 
\textbf{Ringnorm}&&&&&&\\
Feature 1 & $.27\pm.02$ & $.26\pm.02$ & $\V{.25\pm.02}$ & $.26\pm.02$ & $\V{.25\pm.02}$ & $\V{.25\pm.02}$ & $\V{.25\pm.02}$ \\ 
Feature 2 & $.28\pm.02$ & $.27\pm.02$ & $\V{.25\pm.02}$ & $.27\pm.02$ & $.26\pm.02$ & $\V{.25\pm.02}$ & $\V{.25\pm.02}$ \\ 
Feature 3 & $.28\pm.02$ & $.27\pm.02$ & $\V{.25\pm.02}$ & $.27\pm.02$ & $.26\pm.03$ & $\V{.25\pm.02}$ & $\V{.25\pm.02}$ \\ 
PCA & $.32\pm.03$ & $.29\pm.03$ & $\V{.25\pm.02}$ & $.26\pm.02$ & $.28\pm.02$ & $.27\pm.02$ & $.26\pm.02$ \\ 
\hline
\noalign{\vspace{0.022cm}} 
\textbf{20 Newsgroups}&&&&&&\\
comp vs sci & $.41\pm.02$ & $.41\pm.02$ & $.41\pm.02$ & $.42\pm.03$ & $.40\pm.02$ & $\V{.22\pm.02}$ & $\V{.22\pm.02}$ \\ 
comp vs talk & $.37\pm.03$ & $.37\pm.03$ & $.37\pm.03$ & $.40\pm.05$ & $.34\pm.03$ & $\V{.11\pm.02}$ & $\V{.11\pm.02}$ \\
rec vs sci & $.43\pm.02$ & $.42\pm.02$ & $.42\pm.02$ & $.42\pm.03$ & $.41\pm.02$ & $\V{.17\pm.02}$ & $\V{.17\pm.02}$ \\ 
rec vs talk & $.40\pm.03$ & $.40\pm.03$ & $.40\pm.03$ & $.41\pm.03$ & $.38\pm.03$ & $\V{.15\pm.02}$ & $\V{.15\pm.02}$ \\  
sci vs talk & $.41\pm.03$ & $.41\pm.02$ & $.41\pm.02$ & $.41\pm.04$ & $.39\pm.02$ & $\V{.20\pm.02}$ & $\V{.20\pm.02}$ \\ 
\bottomrule
\end{tabular}
\label{Table1:Experiment_1}
\vspace{-0.2cm}
\end{table*}

\textbf{Experiments with real datasets.}
In the second set of results, we assess the performance of the proposed methods in comparison with existing techniques using real datasets. 
In particular, reweighted and robust methods are implemented with marginal distributions estimated using log-linear models as shown in \cite{Bickel2007,Bickel2009}. 

We generate covariate shift in the datasets following \cite{Huang2006} and \cite{Gretton2009}. 
In particular, we select training and testing samples with different probabilities based on the medians of the first 3 features, and based on the median of the first principal component of features.
In \cite{Huang2006} and \cite{Gretton2009}, covariate shift is generated with a biased sampling for testing instances that are drawn with probability $\delta_{\text{te}}$ if the first principal component or feature is larger than a certain value. 
In those works, the training samples are uniformly sampled, so that the generated covariate shifts correspond to situations where the support of training samples contains that of testing samples. 
In the numerical results of the table below, we generate more general covariate shifts by using a biased sampling both for training and testing instances (using probabilities $\delta_{\text{tr}}=0.7$ and $\delta_{\text{te}}=0.3$). 
These covariate shifts correspond to situations where the support of training and testing samples have certain overlap but they do not need to be contained in each other. Additionally, we include experimental results using the “News20groups” dataset that is intrinsically affected by a covariate shift since the training and testing partitions correspond to different times \cite{Zhang2013}.
We consider the same 5 binary problems used in \cite{Zhang2013}, utilize the 1,000 features with highest Pearson’s correlation, and randomly sample 1,000 training and testing samples in each repetition.

\textbf{Results.}
Table~\ref{Table1:Experiment_1} shows the averaged classification error corresponding to different datasets and covariate shift situations, together with their standard deviations over 100 random partitions as detailed in Appendix~\ref{Appendix_5:Experiments}. 
The first column of the table describes the different covariate shift datasets generated as described above. 

Overall, the experimental results show that the proposed method provides improved adaptation to general covariate shifts, even in situations where the supports of training and testing samples are not contained in each other. 
These results agree with the discussion in Sections~\ref{Section_2.1} and \ref{Section_3.1} as well as the theoretical results in Corollary~\ref{cor3.3} and Theorem~\ref{th_4:bound_GKMM} that show how the proposed methodology can be effective in situations where existing methods based on single weights are challenged. 
The improvement obtained by the methods presented can be clearly observed by comparing the results obtained by the \ac{KMM} method, since that technique is the most closely related to the proposed method. 
In particular, the results show that significant performance improvements can be obtained using a double weighting of both training and testing samples solving \eqref{eq_4:DW-KMMempirical} instead of using the existing \ac{KMM} method (that solves \eqref{eq_4:DW-KMMempirical} fixing the weights $\alpha$ to be one).
\vspace{-0.2cm}
\section{Conclusion}
\vspace{-0.1cm}
Existing approaches for covariate shift adaptation use the ratios between marginal distributions to either weight training or testing samples.
However, the performance of such approaches can be poor when the marginals' supports are not contained in each other or when marginals' ratios take large values. 
This paper proposes a minimax risk classification (\ac{MRC}) approach for covariate shift adaptation that avoids such limitations by weighting both training and testing samples. We present effective techniques that obtain both sets of weights generalizing the conventional kernel mean matching method that only obtains weights for training samples. 
In addition, we present generalization bounds for the proposed methods that show a significant increase in effective sample size. 
The unifying approach and the learning methods proposed can enable techniques capable to adapt to more general scenarios affected by covariate shift.

\vspace{-0.2cm}
\section*{Acknowledgments}
\vspace{-0.1cm}
Funding in direct support of this work has been provided by projects PID2019-105058GA-I00, CNS2022-135203, and CEX2021-001142-S funded by MCIN/AEI/10.13039/ 501100011033  and the European Union “NextGenerationEU”/PRTR, programmes ELKARTEK and BERC-2022-2025 funded by the Basque Government, the project “Early Prognosis of COVID-19 Infections via Machine Learning” funded by the AXA Research Fund, and by the JHU-Amazon AI2AI Faculty Award.

\newpage
\bibliography{bibliography.bib}

\newpage
\appendix
\onecolumn

\section{Detailed derivations describing existing methods and relation with the proposed framework}\label{Appendix_1:Reweighted_Robust}
The following describes reweighted and robust methods for binary classification with $\set{Y}\in\left\lbrace-1,1\right\rbrace$ and log-loss. In particular, we show how, using the specific weights in \eqref{eq_2:reweighted_expectation} and \eqref{eq_2:robust_expectation}, such methods can be obtained from Theorem~\ref{th_3:MRC} in Section~\ref{framework} corresponding to the proposed framework.

Reweighted methods consider classification rules of the form
\begin{equation}\label{eq_2:reweighted_classification_rule}
	\up{h}(y|x)=\frac{1 }{1+\exp\left\lbrace -y\V{x}^T\B{\mu}\right\rbrace }
\end{equation}
and learn the parameter $\B{\mu}$ using the fact that equality \eqref{eq_2:reweighted_expectation} in Section~\ref{Section_2.1} allows to estimate expected losses with respect to the test distribution using training samples since
\begin{alignat}{1}\label{eq_2:reweighted_LR_transform}
		\mathbb{E}_{\up{p}_{\text{te}}(x,y)}\log\left(1+\exp\left\lbrace-y\V{x}^T\B{\mu} \right\rbrace  \right) 
		= \mathbb{E}_{\up{p}_{\text{tr}}(x,y)}\beta(x)\log\left(1+\exp\left\lbrace-y\V{x}^T\B{\mu} \right\rbrace  \right).\notag
\end{alignat}
for $\beta(x)=\up{p}_{\text{te}}(x)/\up{p}_{\text{tr}}(x)$.

Robust methods consider classification rules of the form
\begin{equation}\label{eq_2:robust_classification_rule}
	\up{h}(y|x)=\frac{1 }{1+\exp\left\lbrace -\alpha(x)y\V{x}^T\B{\mu}\right\rbrace }
\end{equation}
with $\alpha(x)=\up{p}_{\text{tr}}(x)/\up{p}_{\text{te}}(x)$. 
Such methods learn the parameter $\B{\mu}$ using the fact that equality \eqref{eq_2:robust_expectation} in Section~\ref{Section_2.1} allows to estimate the expected gradient of losses with respect to the test distribution using training samples since
\begin{align*}\label{eq_2:reweighted_gradient_transform}
		\mathbb{E}_{\up{p}_{\text{te}}(x,y)}\nabla_{\B{\mu}}\log\Big(1+\exp&\left\lbrace-\frac{\up{p}_{\text{tr}}(x)}{\up{p}_{\text{te}}(x)}y\V{x}^T\B{\mu} \right\rbrace  \Big) \\
		&=\mathbb{E}_{\up{p}_{\text{te}}(x,y)}\alpha(x)\left(\frac{-y\V{x}^T}{1+\exp\left\lbrace\frac{\up{p}_{\text{tr}}(x)}{\up{p}_{\text{te}}(x)}y\V{x}^T\B{\mu} \right\rbrace} \right)
		= \mathbb{E}_{\up{p}_{\text{tr}}(x,y)}\frac{-y\V{x}^T}{1+\exp\left\lbrace\frac{\up{p}_{\text{tr}}(x)}{\up{p}_\text{te}(x)}y\V{x}^T\B{\mu} \right\rbrace}.
\end{align*}
for $\alpha(x)=\up{p}_{\text{tr}}(x)/\up{p}_{\text{te}}(x)$.

For their derivation from the Theorem~\ref{th_3:MRC} corresponding to the proposed framework; taking $\Phi(x,y)=y\V{x}/2$, we have that optimization problem in \eqref{eq_3:CSMRC} of Theorem~\ref{th_3:MRC} becomes
\begin{equation}\label{eq_3.1:log_regression}
   \min_{\B{\mu}}-\frac{1}{n}\sum_{i=1}^n\beta(x_i)\frac{y_i\V{x}_i^T}{2}\B{\mu}+\mathbb{E}_{\up{p}_{\text{te}}(x)}\Big\{\log \big(\exp\big\{\alpha(x)\frac{\V{x}^T}{2}\B{\mu}\big\}+\exp\big\{-\alpha(x)\frac{\V{x}^T}{2}\B{\mu}\big\}\big)\Big\}
\end{equation}
in binary classification with log-loss.

If $\alpha(x)=1$, $\beta(x)=\up{p}_{\text{te}}(x)/\up{p}_{\text{tr}}(x)$, the classifier in \eqref{eq_3:h_equality_log} of Theorem~\ref{th_3:MRC} coincides with that of reweighted methods in \eqref{eq_2:reweighted_classification_rule}. In addition, using \eqref{eq_2:reweighted_expectation} in Section~\ref{Section_2.1} and approximating the expectation with training samples, the optimization in \eqref{eq_3.1:log_regression} becomes 
\begin{equation}
    -\frac{1}{n}\sum_{i=1}^n\frac{\up{p}_{\text{te}}(x_i)}{\up{p}_{\text{tr}}(x_i)}\log\left(1+\exp\left\lbrace-y_i\V{x}_i^T\B{\mu}\right\rbrace\right)
\end{equation}
 that coincides with that of reweighted logistic regression \citep{Sugiyama2012}.

If $\alpha(x)=\up{p}_{\text{tr}}(x)/\up{p}_{\text{te}}(x)$, $\beta(x)=1$, the classifier in \eqref{eq_3:h_equality_log} of Theorem~\ref{th_3:MRC} coincides with that of robust methods in \eqref{eq_2:robust_classification_rule}. In addition, using \eqref{eq_2:robust_expectation} in Section~\ref{Section_2.1}, the gradient of objective function in \eqref{eq_3.1:log_regression} becomes 
\begin{equation}\label{eq_3.1:gradient}
    -\frac{1}{n}\sum_{i=1}^n\frac{\V{x}_i^Ty_i}{2}+\mathbb{E}_{\up{p}_{\text{tr}}(x)}\frac{\V{x}^T}{2}\frac{1-\exp\left\lbrace-\frac{\up{p}_{\text{tr}}(x)}{\up{p}_{\text{te}}(x)}\V{x}^T\B{\mu}\right\rbrace}{1+\exp\left\lbrace-\frac{\up{p}_{\text{tr}}(x)}{\up{p}_{\text{te}}(x)}\V{x}^T\B{\mu}\right\rbrace}
\end{equation}
that coincides with that shown in equation (7) in \citep{Liu2014} for robust methods.

\section{Proofs for Section \ref{Section_3}}\label{Appendix_2:Proofs_Sect3}

The proofs of Theorem \ref{th_3:MRC} and \ref{th_3:generalization_bound} below are done for the case of finite $\set{X}$. The proofs for infinite $\set{X}$ can be carried out analogously using Fenchel duality instead of Lagrange duality, similarly to as is done in \cite{Altun2006,Mazuelas2023minimax}.

\begin{proof}[\textbf{Proof of Theorem \ref{th_3:MRC}.}]

Firstly, for each $\up{h}\in\text{T}(\set{X},\set{Y})$, we have that 
\begin{align}\label{opt11}\begin{array}{ccl}\max_{\up{p}\in\set{U}} \ell(\up{h},\up{p})=&\underset{\V{p}}{\max} &\V{l}^{\text{T}}\V{p}-I_+(\V{p})\\
&\mbox{s.t.}&\sum_{y\in\set{Y}}\up{p}(x,y)=\up{p}_{\text{te}}(x),\  \forall x\in\set{X}\\
&&\B{\tau}-\B{\lambda}\preceq\B{\Phi}_\alpha^{\text{T}}\V{p}\preceq\B{\tau}+\B{\lambda}
\end{array}\end{align}
 where $\V{l}$, $\V{p}$, and $\B{\Phi}_\alpha$ denote the vectors and matrix with rows $\ell(\up{h},(x,y))$, $\up{p}(x,y)$, and $\Phi_\alpha(x,y)^{\text{T}}$, respectively, for \mbox{$x\in\set{X}$, $y\in\set{Y}$}, and
$$I_+(\V{p})=\left\{\begin{array}{cc}0&\mbox{if }\V{p}\succeq\V{0}\\\infty&\mbox{otherwise.}\end{array}\right.$$
Optimization problem \eqref{opt11} has Lagrange dual
$$\begin{array}{ccl}&\underset{\B{\mu}_1,\B{\mu}_2,\nu(x)}{\min} &-\big(\B{\tau}-\B{\lambda}\big)^{\text{T}}\B{\mu}_1
+\big(\B{\tau}+\B{\lambda}\big)^{\text{T}}\B{\mu}_2+\mathbb{E}_{\up{p}_{\text{te}}(x)}\nu(x) +f^*(\B{\Phi}_\alpha(\B{\mu}_1-\B{\mu}_2)-\B{\nu})\\&
\mbox{s.t.}&\B{\mu}_1,\B{\mu}_2\succeq\V{0}\end{array}$$
where $\B{\nu}$ is the vector in $\mathbb{R}^{|\set{X}||\set{Y}|}$ with component corresponding with $(x,y)$ for $x\in\set{X}$, $y\in\set{Y}$ given by $\nu(x)$, and $f^*$ is the conjugate function of $f(\V{p})=-\V{l}^{\text{T}}\V{p}+I_+(\V{p})$ given by
$$f^*(\V{w})=\sup_{\V{p}\succeq\V{0}}\V{w}^{\text{T}}\V{p}+\V{l}^{\text{T}}\V{p}=\left\{\begin{array}{cc}0&\mbox{if }\V{w}\preceq-\V{l}\\\infty&\mbox{otherwise}\end{array}\right..$$
Therefore, the Lagrange dual above becomes
$$\begin{array}{ccl}&\underset{\B{\mu}_1,\B{\mu}_2,\nu(x)}{\min} &-\big(\B{\tau}-\B{\lambda}\big)^{\text{T}}\B{\mu}_1
+\big(\B{\tau}+\B{\lambda}\big)^{\text{T}}\B{\mu}_2+\mathbb{E}_{\up{p}_{\text{te}}(x)}\nu(x)\\&
\mbox{s.t.}&\B{\mu}_1,\B{\mu}_2\succeq\V{0}\\&
&\Phi_\alpha(x,y)^{\text{T}}(\B{\mu}_1-\B{\mu}_2)-\nu(x)\leq-\ell(\up{h},(x,y)),\  \forall x\in\set{X},y\in\set{Y}.\end{array}$$
It is easy to see that the solution of such optimization problem $\bar{\B{\mu}}_1,\bar{\B{\mu}}_2$ satisfies that $\bar{\mu}_{1}^{(i)}\bar{\mu}_{2}^{(i)}=0$ for any $i$ such that $\lambda^{(i)}>0$. Then $\B{\lambda}^{\text{T}}(\bar{\B{\mu}}_1+\bar{\B{\mu}}_2)=\B{\lambda}^{\text{T}}|\bar{\B{\mu}}_1-\bar{\B{\mu}}_2|$ and taking $\B{\mu}=\B{\mu}_1-\B{\mu}_2$ the Lagrange dual above is equivalent to
$$\begin{array}{ccl}&\underset{\B{\mu},\nu(x)}{\min} &-\B{\tau}^{\text{T}}\B{\mu}
+\B{\lambda}^{\text{T}}|\B{\mu}|+\mathbb{E}_{\up{p}_{\text{te}}(x)}\nu(x)\\&
&\Phi_\alpha(x,y)^{\text{T}}\B{\mu}-\nu(x)\leq-\ell(\up{h},(x,y)),\  \forall x\in\set{X},y\in\set{Y}\end{array}$$
that has the same value as $\max_{\up{p}\in\set{U}} \ell(\up{h},\up{p})$ since the constraints in \eqref{opt11} are affine and $\set{U}$ is non-empty. 

Therefore, 
\begin{align*}\min_{\up{h}\in\text{T}(\set{X},\set{Y})}\max_{\up{p}\in\set{U}} \ell(\up{h},\up{p})=\min_{\up{h},\B{\mu},\nu(x)}&\  -\B{\tau}^{\text{T}}\B{\mu}
+\B{\lambda}^{\text{T}}|\B{\mu}|+\mathbb{E}_{\up{p}_{\text{te}}(x)}\nu(x)\\
&\Phi_\alpha(x,y)^{\text{T}}\B{\mu}-\nu(x)\leq-\ell(\up{h},(x,y)),\  \forall x\in\set{X},y\in\set{Y}.\end{align*}
For 0-1-loss we have that
\begin{align*}\Phi_\alpha(x,y)^{\text{T}}\B{\mu}&-\nu(x)\leq -1+\up{h}(y|x),\  \forall x\in\set{X},y\in\set{Y}\\
&\Rightarrow\sum_{y\in\set{C}}\big(\Phi_\alpha(x,y)^{\text{T}}\B{\mu}-\nu(x)+1\big)\leq 1,\  \forall \set{C}\subseteq\set{Y}, x\in\set{X}\\
&\Rightarrow \nu(x)\geq 1+\frac{\sum_{y\in\set{C}}\Phi_\alpha(x,y)^{\text{T}}\B{\mu}-1}{|\set{C}|}, \  \forall \set{C}\subseteq\set{Y}, x\in\set{X}\\
&\Rightarrow \nu(x)\geq\varphi_{01}(\B{\mu},x,\alpha(x)), \  \forall x\in\set{X}.
\end{align*}

Therefore, for each $\B{\mu}$, we have that any classification rule satisfying 
$$\up{h}(y|x)\geq \Phi_\alpha(x,y)^{\text{T}}\B{\mu}-\varphi_{01}(\B{\mu},x,\alpha(x))+1,\  \forall x\in\set{X},y\in\set{Y}$$
is solution of 
\begin{alignat}{2}
\min_{\up{h},\nu(x)}&\  \mathbb{E}_{\up{p}_{\text{te}}(x)}\nu(x)&&=\mathbb{E}_{\up{p}_{\text{te}}(x)}\varphi_{01}(\B{\mu},x,\alpha(x))\notag\\
&\Phi_\alpha(x,y)^{\text{T}}\B{\mu}-\nu(x)+1\leq\up{h}(y|x),\  \forall x\in\set{X},y\in\set{Y}&&\notag\end{alignat}
and the result is obtained because for any $x\in\set{X}$, we have that
$$\sum_{y\in\set{Y}}\big(\Phi_\alpha(x,y)^{\text{T}}\B{\mu}-\varphi_{01}(\B{\mu},x,\alpha(x))+1\big)_+=1$$
because otherwise there would exist $\nu_x<\varphi_{01}(\B{\mu},x,\alpha(x))$ such that
$$1=\sum_{y\in\set{Y}}\big(\Phi_\alpha(x,y)^{\text{T}}\B{\mu}-\nu_x+1\big)_+=\max_{\set{C}\subseteq\set{Y}}\sum_{y\in\set{C}}\big(\Phi_\alpha(x,y)^{\text{T}}\B{\mu}-\nu_x+1\big)$$
which contradicts the definition of $\varphi_{01}(\B{\mu},x,\alpha(x))$.

The case of log-loss is analogous to the case for 0-1-loss above taking into account that
\begin{align*}\Phi_\alpha(x,y)^{\text{T}}\B{\mu}&-\nu(x)\leq \log(\up{h}(y|x)),\  \forall x\in\set{X},y\in\set{Y}\\
&\Rightarrow\sum_{y\in\set{Y}}\exp\{\Phi_\alpha(x,y)^{\text{T}}\B{\mu}-\nu(x)\}\leq 1,\  \forall  x\in\set{X}\\
&\Rightarrow \nu(x)\geq \log\big(\sum_{y\in\set{Y}}\exp\{\Phi_\alpha(x,y)^{\text{T}}\B{\mu}\}\big), \  \forall x\in\set{X}\\
&\Rightarrow \nu(x)\geq\varphi_{\text{log}}(\B{\mu},x,\alpha(x)), \  \forall x\in\set{X}.
\end{align*}
\end{proof}
The lemma below is used in the proof of Theorem~\ref{th_3:generalization_bound}.
\begin{lemma}\label{lemma_1}
	Let $\set{U}$ be the uncertainty set given by \eqref{uncertainty_set} for $\B{\tau},\B{\lambda}\in\mathbb{R}^m$, and $\up{h}$ be a classification rule. If
	\begin{alignat}{1}
		\overline{R}_{01}\left(\set{U},\up{h}\right) &=\min_{\B{\mu}}-\B{\tau}^T\B{\mu}+\mathbb{E}_{\up{p}_{\text{te}}(x)}\max_{y\in\set{Y}}\left\lbrace 1+\alpha(x)\Phi(x,y)^T\B{\mu}-\up{h}(y|x)\right\rbrace+\B{\lambda}^T|\B{\mu}| \\
        \overline{R}_{\log}\left(\set{U},\up{h}\right) &=\min_{\B{\mu}}-\B{\tau}^T\B{\mu}+\mathbb{E}_{\up{p}_{\text{te}}(x)}\max_{y\in\set{Y}}\left\lbrace \alpha(x)\Phi(x,y)^T\B{\mu}-\log\up{h}(y|x)\right\rbrace+\B{\lambda}^T|\B{\mu}| 
	\end{alignat}
	then, for any $\up{p}\in\set{U}$
	\begin{align}
\ell_{01}(\up{h},\up{p})&\leq	\overline{R}_{01}\left(\set{U},\up{h}\right)\\
 	\ell_{\log}(\up{h},\up{p})&\leq	\overline{R}_{\log}\left(\set{U},\up{h}\right).
	\end{align}
\end{lemma}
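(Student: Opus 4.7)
The plan is to prove the bound pointwise (for arbitrary $\B{\mu}$), take expectation with respect to $\up{p}\in\set{U}$, exploit that $\up{p}$ has the prescribed marginal $\up{p}_{\text{te}}(x)$, and then use the uncertainty-set constraint to control the remaining feature-expectation term. Since this is valid for every $\B{\mu}$, the bound survives the minimisation that defines $\overline{R}(\set{U},\up{h})$. The argument is symmetric for 0-1 loss and log-loss, so I would just spell it out for 0-1 and remark that the log case is identical after replacing $1-\up{h}(y|x)$ by $-\log\up{h}(y|x)$.

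First I would fix an arbitrary $\B{\mu}\in\mathbb{R}^m$ and write the trivial identity $1-\up{h}(y|x)=\bigl(1-\up{h}(y|x)+\alpha(x)\Phi(x,y)^T\B{\mu}\bigr)-\alpha(x)\Phi(x,y)^T\B{\mu}$, and upper-bound the first bracket by its maximum over $y\in\set{Y}$ to obtain the pointwise inequality
\begin{equation*}
\ell_{01}(\up{h},(x,y))\leq\max_{y'\in\set{Y}}\bigl\{1-\up{h}(y'|x)+\alpha(x)\Phi(x,y')^T\B{\mu}\bigr\}-\alpha(x)\Phi(x,y)^T\B{\mu}.
\end{equation*}
Taking expectation under $\up{p}\in\set{U}$ and using that the $x$-marginal of $\up{p}$ equals $\up{p}_{\text{te}}(x)$ (so the max term, depending only on $x$, can be rewritten as an expectation under $\up{p}_{\text{te}}(x)$) gives
\begin{equation*}
\ell_{01}(\up{h},\up{p})\leq\mathbb{E}_{\up{p}_{\text{te}}(x)}\max_{y\in\set{Y}}\bigl\{1-\up{h}(y|x)+\alpha(x)\Phi(x,y)^T\B{\mu}\bigr\}-\bigl(\mathbb{E}_{\up{p}}\Phi_\alpha(x,y)\bigr)^T\B{\mu}.
\end{equation*}

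Next I would handle the last term via the uncertainty-set constraint. Writing $-\mathbb{E}_{\up{p}}\Phi_\alpha^T\B{\mu}=-\B{\tau}^T\B{\mu}+(\B{\tau}-\mathbb{E}_{\up{p}}\Phi_\alpha)^T\B{\mu}$ and applying the componentwise Hölder-type inequality $\V{v}^T\B{\mu}\leq|\V{v}|^T|\B{\mu}|$ together with $|\B{\tau}-\mathbb{E}_{\up{p}}\Phi_\alpha|\preceq\B{\lambda}$, I obtain $-\mathbb{E}_{\up{p}}\Phi_\alpha^T\B{\mu}\leq-\B{\tau}^T\B{\mu}+\B{\lambda}^T|\B{\mu}|$. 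Plugging this into the previous display yields, for every $\B{\mu}$,
\begin{equation*}
\ell_{01}(\up{h},\up{p})\leq-\B{\tau}^T\B{\mu}+\mathbb{E}_{\up{p}_{\text{te}}(x)}\max_{y\in\set{Y}}\bigl\{1-\up{h}(y|x)+\alpha(x)\Phi(x,y)^T\B{\mu}\bigr\}+\B{\lambda}^T|\B{\mu}|,
\end{equation*}
and minimising the right-hand side over $\B{\mu}$ gives $\ell_{01}(\up{h},\up{p})\leq\overline{R}_{01}(\set{U},\up{h})$. For log-loss the identical manipulation with $-\log\up{h}(y|x)$ in place of $1-\up{h}(y|x)$ produces $\ell_{\log}(\up{h},\up{p})\leq\overline{R}_{\log}(\set{U},\up{h})$.

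I do not expect a serious obstacle here: the only subtle point is recognising that the constraint $\up{p}(x)=\up{p}_{\text{te}}(x)$ in the definition of $\set{U}$ is what lets the $\max_y$ term be written as an expectation under $\up{p}_{\text{te}}$ (so the bound does not still depend on $\up{p}$ in that piece), and that the $\B{\lambda}$-constraint is exactly what one needs to pass from $\mathbb{E}_{\up{p}}\Phi_\alpha$ to $\B{\tau}$ at the price of the $\B{\lambda}^T|\B{\mu}|$ regulariser. Both are immediate from the definition of $\set{U}$ in \eqref{uncertainty_set}.
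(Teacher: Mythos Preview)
Your proof is correct, but it proceeds by a different route than the paper's.

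The paper proves the lemma via Lagrange duality: it writes $\max_{\hat{\up{p}}\in\set{U}}\ell(\up{h},\hat{\up{p}})=-\min_{\hat{\up{p}}\in\set{U}}\mathbb{E}_{\hat{\up{p}}}q$ for $q(x,y)=\up{h}(y|x)-1$ (resp.\ $\log\up{h}(y|x)$), computes the Lagrange dual of the inner minimisation over the affine constraints defining $\set{U}$, and appeals to strong duality (affine constraints, $\set{U}\neq\emptyset$) to conclude that $\max_{\hat{\up{p}}\in\set{U}}\ell(\up{h},\hat{\up{p}})=\overline{R}(\set{U},\up{h})$ \emph{exactly}. The inequality for any $\up{p}\in\set{U}$ is then immediate.

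Your argument is essentially the weak-duality half of this: the add-and-subtract of $\Phi_\alpha(x,y)^T\B{\mu}$ followed by maximising over $y$ is precisely the step that, in the dual picture, produces the constraint-wise bound, and your use of $|\B{\tau}-\mathbb{E}_{\up{p}}\Phi_\alpha|\preceq\B{\lambda}$ is the primal-feasibility ingredient. What you gain is elementarity --- no conjugate functions, no appeal to strong duality, and the argument does not need $\set{U}$ to be non-empty (the statement is vacuous otherwise anyway). What the paper's approach buys is the stronger conclusion that $\overline{R}(\set{U},\up{h})$ is not merely an upper bound but the exact worst-case risk over $\set{U}$; this is not asserted in the lemma, but it is the natural companion to the minimax characterisation in Theorem~\ref{th_3:MRC} and explains why the paper reuses the same dual machinery. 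For the lemma as stated, your direct argument is fully sufficient and arguably cleaner.
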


\begin{proof}[\textbf{Proof of Lemma \ref{lemma_1}}]
	
	The proof is analogous to the proof of Theorem 5 of \citep{Mazuelas2023minimax}. The case $\set{U}=\emptyset$ is trivial. For the case where $\set{U}\neq\emptyset$, we will first calculate the Lagrange dual of the optimization problem $\min_{\hat{\up{p}}\in\set{U}}\mathbb{E}_{\hat{\up{p}}}q$ for a general function $q:\set{X}\times\set{Y}\rightarrow\mathbb{R}$. Then we will consider the fact that for any $\up{p}\in\set{U}$ and $\up{h}\in \text{T}(\set{X},\set{Y})$,
	\begin{equation}\nonumber
		\min_{\hat{\up{p}}\in\set{U}}\ell(\up{h},\hat{\up{p}})\leq \ell(\up{h},\up{p})\leq	\max_{\hat{\up{p}}\in\set{U}}\ell(\up{h},\hat{\up{p}})
	\end{equation}
	and
	\begin{alignat}{1}
			\max_{\hat{\up{p}}\in\set{U}}\ell_{01}(\up{h},\hat{\up{p}})&=-\min_{\hat{\up{p}}\in\set{U}}\mathbb{E}_{\hat{\up{p}}}\left\lbrace \up{h}(y|x)-1 \right\rbrace\nonumber\\
            \max_{\hat{\up{p}}\in\set{U}}\ell_{\log}(\up{h},\hat{\up{p}})&=-\min_{\hat{\up{p}}\in\set{U}}\mathbb{E}_{\hat{\up{p}}}\log\up{h}(y|x)\nonumber
	\end{alignat}
    for 0-1-loss and log-loss respectively.
	
	First, we have that $\min_{\hat{\up{p}}\in\set{U}}\mathbb{E}_{\hat{\up{p}}}q$ is equal to
	\begin{alignat}{2}\label{theorem_5_proof_step_1}
			\min_{\hat{\B{\up{p}}}}\  \  & &&\B{q}^T\hat{\B{\up{p}}}+I_+(\hat{\B{\up{p}}})\notag\\
			\mbox{s.t.}\  \  & &&-\sum_{y\in\set{Y}}\hat{\up{p}}(x,y)=-\up{p}_{\text{te}}(x)\text{ for all }x\in\set{X}\notag\\
			& &&\B{\tau}-\B{\lambda}\preceq\B{\Phi}_{\alpha}^T\hat{\B{\up{p}}}\preceq\B{\tau}+\B{\lambda}
	\end{alignat}
	where $\hat{\B{\up{p}}}$, $\B{q}$, $\B{\Phi}_{\alpha}$ denote the vectors and matrix with rows $\hat{\up{p}}(x,y)$, $q(x,y)$ and $\alpha(x)\Phi(x,y)^T$, respectively, for $ x\in\set{X}$, $y\in\set{Y}$, and
	\begin{equation*}
		I_+(\hat{\B{\up{p}}})=\left\lbrace\begin{array}{cc} 0 & \text{if }\hat{\B{\up{p}}}\succeq\B{0} \\ \infty & \text{otherwise}. \end{array}\right.
	\end{equation*}
	Optimization problem \eqref{theorem_5_proof_step_1} has Lagrange dual
	\begin{equation*}
		\begin{split}
			\max_{\B{\mu}_1,\B{\mu}_2,\nu(x)}\quad & (\B{\tau}-\B{\lambda})^T\B{\mu}_1-(\B{\tau}+\B{\lambda})^T\B{\mu}_2+\mathbb{E}_{\up{p}_{\text{te}}(x)}\nu(x)-f^*\left(\B{\Phi}_{\alpha}(\B{\mu}_1-\B{\mu}_2)+\B{\nu}\right) \\
			\mbox{s.t.}\   \quad & \B{\mu}_1,\B{\mu}_2\succeq\B{0}
		\end{split}
	\end{equation*}
     where $\B{\nu}$ denotes the vector in $\mathbb{R}^{|\set{X}||\set{Y}|}$ with component corresponding with $(x,y)$ for $x\in\set{X}$, $y\in\set{Y}$ given by $\nu(x)$, and $f^*$ is the conjugate function of $f(\hat{\V{p}})=\V{q}^T\hat{\V{p}}+I_+(\hat{\B{\up{p}}})$ that becomes
	\begin{equation*}
		f^*(\B{w})=\left\lbrace\begin{array}{cc} 0 & \text{if }\V{w}\preceq \V{q} \\ \infty & \text{otherwise}. \end{array}\right.
	\end{equation*}
	Therefore, the previous Lagrange dual becomes 
	\begin{equation*}
		\begin{split}
			\max_{\B{\mu}_1,\B{\mu}_2,\nu(x)}\quad & (\B{\tau}-\B{\lambda})^T\B{\mu}_1-(\B{\tau}+\B{\lambda})^T\B{\mu}_2+\mathbb{E}_{\up{p}_{\text{te}}(x)}\nu(x) \\
			\mbox{s.t.}\   \quad & \B{\mu}_1,\B{\mu}_2\succeq\B{0} \\
			& \B{\Phi}_{\alpha}(\B{\mu}_1-\B{\mu}_2)+\B{\nu}\preceq \B{q}
		\end{split}
	\end{equation*}
	which is equivalent to
	\begin{equation*}
		\begin{split}
			\max_{\B{\mu}_1,\B{\mu}_2}\quad & (\B{\tau}-\B{\lambda})^T\B{\mu}_1-(\B{\tau}+\B{\lambda})^T\B{\mu}_2+\mathbb{E}_{\up{p}_{\text{te}}(x)}\min_{y\in\set{Y}}\left\lbrace q(x,y)-\alpha(x)\Phi(x,y)^T(\B{\mu}_1-\B{\mu}_2) \right\rbrace \\
			\mbox{s.t.}\   \quad & \B{\mu}_1,\B{\mu}_2\succeq\B{0}.
		\end{split}
	\end{equation*}
	Taking $\B{\mu}=\B{\mu}_1-\B{\mu}_2$ the Lagrange dual problem is equivalent to
	\begin{equation*}
		\max_{\B{\mu}} \B{\tau}^T\B{\mu}+\mathbb{E}_{\up{p}_{\text{te}}(x)}\min_{y\in\set{Y}}\left\lbrace q(x,y)-\alpha(x)\Phi(x,y)^T\B{\mu}\right\rbrace -\B{\lambda}^T|\B{\mu}|
	\end{equation*}
	that has the same value as its primal $\min_{\hat{\up{p}}\in\set{U}}\mathbb{E}_{\hat{\up{p}}}q$ since the constraints defining $\set{U}$ are affine and $\set{U}\neq\emptyset$.	
	Then, we have that
	\begin{alignat}{2}
		\max_{\hat{\up{p}}\in\set{U}}\ell_{01}(\up{h},\hat{\up{p}})&=-\min_{\hat{\up{p}}\in\set{U}}\mathbb{E}_{\hat{\up{p}}}\left\lbrace \up{h}(y|x)-1\right\rbrace&&=\min_{\B{\mu}}- \B{\tau}^T\B{\mu}+\mathbb{E}_{\up{p}_{\text{te}}(x)}\max_{y\in\set{Y}}\left\lbrace1+ \alpha(x)\Phi(x,y)^T\B{\mu}-\up{h}(y|x)\right\rbrace +\B{\lambda}^T|\B{\mu}|\nonumber\\
        \max_{\hat{\up{p}}\in\set{U}}\ell_{\log}(\up{h},\hat{\up{p}})&=-\min_{\hat{\up{p}}\in\set{U}}\mathbb{E}_{\hat{\up{p}}}\log\up{h}(y|x)&&=\min_{\B{\mu}}- \B{\tau}^T\B{\mu}+\mathbb{E}_{\up{p}_{\text{te}}(x)}\max_{y\in\set{Y}}\left\lbrace \alpha(x)\Phi(x,y)^T\B{\mu}-\log\up{h}(y|x)\right\rbrace +\B{\lambda}^T|\B{\mu}|\nonumber
	\end{alignat}
 for 0-1-loss and log-loss respectively.
\end{proof}

\begin{proof}[\textbf{Proof of Theorem \ref{th_3:generalization_bound}}]
	For inequality \eqref{eq_3:th_3_bound1}, let $\set{U}_\infty$ be the uncertainty set given by the exact mean vector \mbox{$\tau_{\infty}=\mathbb{E}_{\up{p}_{\text{te}}}\Phi_{\alpha}(x,y)$}, i.e., 
    \begin{alignat}{1}\label{uncertainty_set_infty}
		\set{U}_{\infty}=\left\lbrace\right.&\up{p}\in\Delta\left(\set{X}\times\set{Y}\right):\left|\mathbb{E}_{\up{p}}\Phi_\alpha(x,y)-\B{\tau}_{\infty}\right|\preceq\B{\lambda}\notag\\
		&\left.\text{and }\up{p}(x)=\up{p}_{\text{te}}(x),\forall x\in\mathcal{X}\right\rbrace.
    \end{alignat}
    It is clear that we have $\up{p}_{\text{te}}(x,y)\in\set{U}_\infty$, then for 0-1-loss, using Lemma~\ref{lemma_1} and the definition of $\mathrm{h}(y|x)$ in \eqref{eq_3:h_equality_01}, we have that
    \begin{alignat}{1}
		R(\mathrm{h}^{\sset{U}})&\leq \overline{R}_{01}(\set{U}_\infty,\mathrm{h}^{\sset{U}})=\min_{\B{\mu}}-\B{\tau}_\infty^T\B{\mu}+\mathbb{E}_{\mathrm{p}_{\text{te}}(x)}\max_{y\in\set{Y}}\left\lbrace 1+\alpha(x)\Phi(x,y)^T\B{\mu}-\mathrm{h}(y|x)\right\rbrace \notag\\
            &\leq -\B{\tau}_\infty^T\B{\mu}^*+\mathbb{E}_{\mathrm{p}_{\text{te}}(x)}\max_{y\in\set{Y}}\left\lbrace 1+\alpha(x)\Phi(x,y)^T\B{\mu}^*-\mathrm{h}(y|x)\right\rbrace \label{ApB_43}\\
		&\leq -\B{\tau}_\infty^T\B{\mu}^*+\mathbb{E}_{\mathrm{p}_{\text{te}}(x)}\max_{y\in\set{Y}}\varphi_{01}(\B{\mu}^*,x,\alpha(x)) \label{ApB_44}\\
		&= -\B{\tau}_\infty^T\B{\mu}^*+\mathbb{E}_{\mathrm{p}_{\text{te}}(x)} \varphi_{01}(\B{\mu}^*,x,\alpha(x)) \label{ApB_45}\\
		&=R(\set{U})+(\B{\tau}-\B{\tau}_\infty)^T\B{\mu}^*-\B{\lambda}^T|\B{\mu}^*|.\label{ApB_46}
    \end{alignat}
 where, for inequality \eqref{ApB_43}-\eqref{ApB_44}, we have used the fact that $\mathrm{h}(y|x)\geq\alpha(x)\Phi(x,y)^T\B{\mu}^*+1-\varphi_{01}(\B{\mu}^*,x,\alpha(x))$ and for inequality \eqref{ApB_45}-\eqref{ApB_46} we have added and subtracted $\B{\tau}^T\B{\mu}^*$ and $\B{\lambda}^T\B{\mu}^*$, and used the definition of $R(\set{U})$ in \eqref{eq_3:R_l(U)}.
 
For log-loss, using Lemma~\ref{lemma_1} and the definition of $\mathrm{h}(y|x)$ in \eqref{eq_3:h_equality_log}, we have that
    \begin{alignat}{1}
	R(\mathrm{h}^{\sset{U}})&\leq \overline{R}_{\log}               
        (\set{U}_\infty,\mathrm{h}^{\sset{U}})=\min_{\B{\mu}}-\B{\tau}_\infty^T\B{\mu}+\mathbb{E}_{\mathrm{p}_{\text{te}}(x)}\max_{y\in\set{Y}}\left\lbrace \alpha(x)\Phi(x,y)^T\B{\mu}-\log\mathrm{h}(y|x)\right\rbrace\notag \\
	&\leq -\B{\tau}_\infty^T\B{\mu}^*+\mathbb{E}_{\mathrm{p}_{\text{te}}                  (x)}\max_{y\in\set{Y}}\left\lbrace \alpha(x)\Phi(x,y)^T\B{\mu}^*-\log\mathrm{h}       (y|x)\right\rbrace \label{ApB_47}\\
        &= -\B{\tau}_\infty^T\B{\mu}^*+\mathbb{E}_{\mathrm{p}_{\text{te}}(x)}\max_{y\in\set{Y}} \varphi_{\log}(\B{\mu}^*,x,\alpha(x)) \label{ApB_48}\\
	&= -\B{\tau}_\infty^T\B{\mu}^*+\mathbb{E}_{\mathrm{p}_{\text{te}}(x)}                 \varphi_{\log}(\B{\mu}^*,x,\alpha(x)) \label{ApB_49}\\
	&=R(\set{U})+(\B{\tau}-\B{\tau}_\infty)^T\B{\mu}^*-                 \B{\lambda}^T|\B{\mu}^*|\label{ApB_50}
    \end{alignat}
    where, for inequality \eqref{ApB_47}-\eqref{ApB_48} we have used the fact that $\log\mathrm{h}(y|x)=\alpha(x)\Phi(x,y)^T\B{\mu}^*-\varphi_{\log}(\B{\mu}^*,x,\alpha(x))$ and for inequality \eqref{ApB_49}-\eqref{ApB_50} we have added and subtracted $\B{\tau}^T\B{\mu}^*$ and $\B{\lambda}^T\B{\mu}^*$, and used the definition of $R(\set{U})$ in \eqref{eq_3:R_l(U)}.
	
 For inequality \eqref{eq_3:th_3_bound2}, note that using the definition of $\B{\mu}^*$ and \eqref{ApB_45} (resp. \eqref{ApB_49}) for 0-1-loss (resp. log-loss), we have that
	\begin{equation*}
		\begin{split}
			R(\up{h}^{\sset{U}})&\leq-\B{\tau}_\infty^T\B{\mu}^*+\mathbb{E}_{\up{p}_{\text{te}}(x)} \varphi_{\ell}(\B{\mu}^*,x,\alpha(x)) \\
			&\leq-\B{\tau}^T\B{\mu}_{\infty}+\mathbb{E}_{\up{p}_{\text{te}}(x)} \varphi_{\ell}(\B{\mu}_{\infty},x,\alpha(x)) +\B{\lambda}^T|\B{\mu}_{\infty}|+\left(\B{\tau}-\B{\tau}_{\infty}\right)^T\B{\mu}^*-\B{\lambda}^T|\B{\mu}^*| \\
			&= R^{\infty}+\B{\lambda}^T\left(|\B{\mu}_{\infty}|-|\B{\mu}^*|\right)+(\B{\tau}_{\infty}-\B{\tau})^T\B{\mu}_{\infty}+(\B{\tau}-\B{\tau}_{\infty})^T\B{\mu}^*\\
            &\leq R^{\infty}+\B{\lambda}^T\left(|\B{\mu}_{\infty}|-|\B{\mu}^*|\right)+\left|\B{\tau}-\B{\tau}_{\infty}\right|^T\left|\B{\mu}_{\infty}-\B{\mu}^*\right|.
		\end{split}	
	\end{equation*}
\end{proof}
\section{Proofs for Section \ref{Section_4}}\label{Appendix_3:Proofs_Sect4}

\begin{proof}[\textbf{Proof of Theorem \ref{th_4:bound_GKMM}}]
    The proof is analogous to Example 6.3 in \cite{Boucheron2013} that shows a Hoeffding-type inequality in Hilbert space.
    
    We consider $n+t$ independent random variables taking values in the Hilbert space $\set{H}$ as follows
	\begin{equation}
		f_i=\left\lbrace \begin{array}{ll}
			\frac{1}{n}\hat{\beta}(x_i)K(x_i) & \text{ if }i=1,2,\ldots,n\\
			&\\
			-\frac{1}{t}\hat{\alpha}(x_{i})K(x_{i}) & \text{ if }i=n+1,n+2,\ldots,n+t.
		\end{array}\right. 
	\end{equation}
and we want to bound $||\sum_{i=1}^{n+t}f_i||_{\set{H}}$.
 We have that, 
	\begin{equation}
		||f_i||_{\set{H}}\leq\left\lbrace \begin{array}{ll}
			\frac{1}{n}\frac{B}{\sqrt{D}}\kappa & \text{ if }i=1,2,\ldots,n\\
			&\\
			\frac{1}{t}\kappa & \text{ if }i=n+1,n+2,\ldots,n+t.
		\end{array}\right. 
	\end{equation}

	Taking $v=\kappa^2\left(\frac{B^2}{Dn}+\frac{1}{t} \right)$ and using the bounded differences inequality (Theorem 6.2 in \citep{Boucheron2013}), we have that, for all $l\geq\sqrt{v}$
	\begin{equation}
		\begin{split}
			\mathbb{P}\left\lbrace \left| \left| \sum_{i=1}^{n+t}f_i\right| \right|_\set{H} >l\right\rbrace &=\mathbb{P}\left\lbrace \left| \left| \sum_{i=1}^{n+t}f_i\right| \right|_\set{H} -\mathbb{E} \left| \left| \sum_{i=1}^{n+t}f_i\right| \right|_\set{H} >l-\mathbb{E} \left| \left| \sum_{i=1}^{n+t}f_i\right| \right|_\set{H} \right\rbrace\\
			&\leq \exp\left\lbrace-\frac{\left(l-\mathbb{E} \left| \left| \sum_{i=1}^{n+t}f_i\right| \right|_\set{H}  \right) ^2}{2v} \right\rbrace.
		\end{split}
	\end{equation}
	Finally, using Hölder's inequality and by independence, we have that
	\begin{equation}\nonumber
		\mathbb{E} \left| \left| \sum_{i=1}^{n+t}f_i\right| \right|_\set{H} \leq\sqrt{\mathbb{E} \left| \left| \sum_{i=1}^{n+t}f_i\right| \right|_\set{H} ^2}=\sqrt{\sum_{i=1}^{n+t}\mathbb{E}\left| \left|f_i \right| \right|_\set{H} ^2}\leq\sqrt{v}.\nonumber
	\end{equation}
	Therefore, 
    \begin{equation}\nonumber
			\exp\left\lbrace-\frac{\left(l-\sqrt{v}  \right) ^2}{2v} \right\rbrace =\exp\left\lbrace-\frac{\left(l-\sqrt{\kappa^2\left(\frac{B^2}{Dn}+\frac{1}{t} \right)}  \right) ^2}{2\kappa^2\left(\frac{B^2}{Dn}+\frac{1}{t} \right)} \right\rbrace
    \end{equation}
    so that, 
	\begin{equation}\nonumber
		\left| \left| \frac{1}{n}\sum_{i=1}^n\hat{\beta}(x_i)K(x_i)-\frac{1}{t}\sum_{i=n+1}^{n+t}\hat{\alpha}(x_{n+i})K(x_{n+i})\right| \right|_\set{H} \leq\left(1+\sqrt{2\log\frac{1}{\delta}}\right) \kappa\sqrt{\left(\frac{B^2}{Dn}+\frac{1}{t} \right) }
	\end{equation}
with probability at least $1-\delta$.
\end{proof}
\section{Quadratic version of \ac{DW-KMM}}\label{Appendix_4:Quadratic_DKMM}

The convex optimization in \eqref{eq_4:DW-KMMempirical} is a quadratic problem since the squared norm in $\set{H}$ can be written as
\vspace{-0.2cm}
\begin{alignat}{1}
		\Bigg\| \frac{1}{t}\sum_{i=1}^{t}\alpha^{(i)}K(x_{n+i})-&\frac{1}{n}\sum_{i=1}^n\beta^{(i)}K(x_i)\Bigg\| ^2_\set{H} \notag\\
		=&\frac{1}{t^2}\sum_{i,j=1}^{t}\alpha^{(i)}\alpha^{(j)}k(x_{n+i},x_{n+j})+\frac{1}{n^2}\sum_{i,j=1}^{n}\beta^{(i)}\beta^{(j)}k(x_i,x_j)  -\frac{2}{nt}\sum_{i=1}^{t}\sum_{j=1}^n\alpha^{(i)}\beta^{(j)}k(x_{n+i},x_j) \notag\\
		=&\frac{\B{\alpha}^T}{t}\begin{bmatrix}k(x_{n+1},x_{n+1}) & \cdots & k(x_{n+1},x_{n+t})  \\ \vdots & \ddots & \vdots \\ k(x_{n+t},x_{n+1}) & \cdots & k(x_{n+t},x_{n+t}) \end{bmatrix}\frac{\B{\alpha}}{t}+\frac{\B{\beta}^T}{n}\begin{bmatrix}k(x_1,x_1) & \cdots & k(x_1,x_n)  \\ \vdots & \ddots & \vdots \\ k(x_n,x_1) & \cdots & k(x_n,x_n) \end{bmatrix}\frac{\B{\beta}}{n} \notag\\
		&-2\frac{\B{\beta}^T}{n}\begin{bmatrix}k(x_1,x_{n+1}) & \cdots & k(x_1,x_{n+t})  \\ \vdots & \ddots & \vdots \\ k(x_n,x_{n+1}) & \cdots & k(x_{n},x_{n+t}) \end{bmatrix}\frac{\B{\alpha}}{t}\notag\\
	=&\left[\B{\beta}^T/n,-\B{\alpha}^T/t\right]\V{K}\begin{bmatrix}\B{\beta}/n\\-\B{\alpha}/t\end{bmatrix}\notag
\end{alignat}
where $\V{K}$ is the kernel matrix given by $\up{K}^{(i,j)}=k(x_i,x_j)$.

 Therefore, the optimization problem \eqref{eq_4:DW-KMMempirical} is equivalent to the quadratic optimization problem
 \begin{alignat}{2}\label{eq_5:GKMMquadratic}
 			\min_{\B{\alpha},\B{\beta}}\  \  & &&\Big[\B{\beta}^T/n,-\B{\alpha}^T/t \Big]\V{K}\begin{bmatrix}\B{\beta}/n\\
 				-\B{\alpha}/t\end{bmatrix}\notag\\
 			\text{s.t. }\  \  & &&\textbf{0}\preceq\B{\beta}\preceq (B/\sqrt{D})\V{1},\quad \textbf{0}\preceq \B{\alpha}\preceq\textbf{1}\notag\\
             & &&\left|\B{\beta}^T\B{1}/n-\B{\alpha}^T\B{1}/t\right|\leq \epsilon\\
 			& &&\left| \left|\B{\alpha}-\mathbf{1} \right| \right|\leq  \left(1-\frac{1}{\sqrt{D}} \right)\sqrt{t}\notag.
 \end{alignat}

\section{Implementation
details and additional experimental results}\label{Appendix_5:Experiments}

This appendix details the datasets and settings used for the experiments in Section~\ref{Section_6:Experiments} and shows additional experiments.

For the experiments in Section~\ref{Section_6:Experiments}, we have considered four binary classification datasets, available in the UCI repository \cite{Dua2019}, and previously used in multiple papers on covariate shift adaptation \cite{Gretton2009,Huang2006,Kanamori2009,Mazaheri2020,Wen2014}. 
In addition, we use the dataset “News20groups” that is intrinsically affected by covariate shift \cite{Zhang2013}.

Table~\ref{Table:Experiment_Data} details the characteristics of the datasets used in the experiments. 
The table also shows the parameter $\sigma$ used in the computation of the kernel matrix $\V{K}$ for the RuLSIF, \ac{KMM} and \ac{DW-KMM} methods, which is determined using the common heuristic based on nearest neighbors with $K=50$, as is done in \cite{Wen2014}. 
For the results obtained using the flattening method in \cite{Shimodaira2000} and the RuLSIF method in \cite{Yamada2011} we considered the hyperparameter $\gamma=0.5$, which is the default value used in those papers.

\begin{table*}[ht]
\small
\centering
\caption{Datasets used in the experiments.}
\vspace{0.1cm}
\begin{tabular}{lccccc}
\toprule
\multirow{2}{*}{Dataset} & \multirow{2}{*}{Covariates} & \multicolumn{2}{c}{\multirow{2}{*}{Samples}} & Ratio of & \multirow{2}{*}{$\sigma$} \\ 
&&&&majority class&\\ 
\midrule
\multirow{2}{*}{Blood} & \multirow{2}{*}{3} & \multicolumn{2}{c}{\multirow{2}{*}{748}} & \multirow{2}{*}{76.20$\%$} & \multirow{2}{*}{0.7491}\\ 
&&&&&\\
\hline
\multirow{2}{*}{BreastCancer} & \multirow{2}{*}{9} & \multicolumn{2}{c}{\multirow{2}{*}{683}} & \multirow{2}{*}{65.01$\%$} & \multirow{2}{*}{1.6064}\\ 
&&&&&\\
\hline
\multirow{2}{*}{Haberman} & \multirow{2}{*}{3} & \multicolumn{2}{c}{\multirow{2}{*}{306}} & \multirow{2}{*}{75.53$\%$} & \multirow{2}{*}{1.3024}\\  
&&&&&\\  
\hline
\multirow{2}{*}{Ringnorm} & \multirow{2}{*}{20} & \multicolumn{2}{c}{\multirow{2}{*}{7400}} & \multirow{2}{*}{50.49$\%$} & \multirow{2}{*}{3.8299}\\ 
&&&&&\\
\hline
\multirow{2}{*}{comp vs sci} & \multirow{2}{*}{1000}  & \multirow{2}{*}{5309} & \multirow{2}{*}{3534} & \multirow{2}{*}{55.31$\%$} & \multirow{2}{*}{23.5628}\\ 
&&&&&\\
\hline
\multirow{2}{*}{comp vs talk} & \multirow{2}{*}{1000}  & \multirow{2}{*}{4888} & \multirow{2}{*}{3256} & \multirow{2}{*}{60.06$\%$} & \multirow{2}{*}{23.4890}\\ 
&&&&&\\
\hline
\multirow{2}{*}{rec vs sci} & \multirow{2}{*}{1000}  & \multirow{2}{*}{4762} & \multirow{2}{*}{3169} & \multirow{2}{*}{50.17$\%$} & \multirow{2}{*}{24.5642}\\ 
&&&&&\\
\hline
\multirow{2}{*}{rec vs talk} & \multirow{2}{*}{1000}  & \multirow{2}{*}{4341} & \multirow{2}{*}{2891} & \multirow{2}{*}{55.02$\%$} & \multirow{2}{*}{25.1129}\\ 
&&&&&\\
\hline
\multirow{2}{*}{sci vs talk} & \multirow{2}{*}{1000}  & \multirow{2}{*}{4325} & \multirow{2}{*}{2880} & \multirow{2}{*}{54.85$\%$} & \multirow{2}{*}{24.8320}\\ 
&&&&&\\
\bottomrule
\end{tabular}
\label{Table:Experiment_Data}
\end{table*}

In the additional experiments we study the effectiveness of the proposed selection method for hyperparameter $D$.
Specifically, Tables~\ref{Table:D_01} and \ref{Table:D_log} show the average classification error varying the value of $D$ for the datasets and covariate shifts shown in Table~\ref{Table1:Experiment_1}.
The first column of these tables shows the classification error obtained when selecting $D$ with the proposed method that minimizes the minimax risk, while the other columns show the classification error obtained using specific values of $D$.
The values of the hyperparameter $D$ have been chosen based on the last inequality in the optimization problem \eqref{eq_4:DW-KMMempirical}. 
Specifically, we take the values for $D$ such that $1-1/\sqrt{D}\in\left\lbrace0,0.1,\ldots,0.9\right\rbrace$. 
As can be seen from the tables, the proposed selection method results in performances near those obtained with the best values of $D$.

\begin{table*}[ht]
\small
\centering
\caption{Classification error in 21 scenarios using DW-GCS methods with 0-1-loss varying the value of the hyperparameter $D$.}
\vspace{0.1cm}
\begin{tabular}{p{2.03cm}>{\centering\arraybackslash}p{1cm}>{\centering\arraybackslash}p{0.89cm}>{\centering\arraybackslash}>{\centering\arraybackslash}p{0.89cm}>{\centering\arraybackslash}p{0.89cm}>{\centering\arraybackslash}p{0.89cm}>{\centering\arraybackslash}p{0.89cm}>{\centering\arraybackslash}p{0.89cm}>{\centering\arraybackslash}p{0.89cm}>{\centering\arraybackslash}p{0.89cm}>{\centering\arraybackslash}p{0.89cm}>{\centering\arraybackslash}p{1cm}}
\toprule
\multirow{2}{*}{Dataset} & proposed & \multirow{2}{*}{$D=1$} & \multirow{2}{*}{$D=1.2$} & \multirow{2}{*}{$D=1.6$} & \multirow{2}{*}{$D=2$} & \multirow{2}{*}{$D=2.8$} & \multirow{2}{*}{$D=4$} & \multirow{2}{*}{$D=6.3$} & \multirow{2}{*}{$D=11$} & \multirow{2}{*}{$D=25$} & \multirow{2}{*}{$D=100$}\\ 
&selection&&&&&&&\\
\midrule
\textbf{Blood}&&&&&&&&\\
Feature 1 & $0.30$ & $0.32$ & $0.31$ & $0.31$ & $0.31$ & $0.30$ & $0.30$ & $0.30$ & $\V{0.29}$ & $\V{0.29}$ & $0.30$ \\ 
Feature 2 &$0.38$ & $0.40$ & $0.40$ & $0.40$ & $0.40$ & $0.39$ & $0.39$ & $0.38$ & $0.38$ & $\V{0.37}$ & $0.38$ \\ 
Feature 3 & $0.34$ & $0.39$ & $0.37$ & $0.36$ & $0.36$ & $0.35$ & $0.34$ & $0.34$ & $0.34$ & $\V{0.33}$ & $0.34$ \\ 
PCA  & $0.28$ & $0.32$ & $0.30$ & $0.29$ & $0.28$ & $\V{0.27}$ & $\V{0.27}$ & $0.28$ & $0.28$ & $0.28$ & $0.28$ \\ 
\hline 
\textbf{BreastCancer}&&&&&&&&\\
Feature 1 & $\V{0.04}$ & $0.05$ & $0.05$ & $\V{0.04}$ & $\V{0.04}$ & $\V{0.04}$ & $\V{0.04}$ & $0.05$ & $0.05$ & $\V{0.04}$ & $\V{0.04}$ \\ 
Feature 2 & $\V{0.04}$ & $0.06$ & $0.06$ & $0.05$ & $0.05$ & $0.05$ & $0.06$ & $0.06$ & $0.06$ & $\V{0.04}$ & $\V{0.04}$ \\ 
Feature 3 & $\V{0.04}$ & $0.06$ & $0.05$ & $0.05$ & $0.05$ & $0.06$ & $0.05$ & $0.06$ & $0.05$ & $\V{0.04}$ & $\V{0.04}$ \\ 
PCA  & $\V{0.02}$ & $0.03$ & $0.03$ & $\V{0.02}$ & $\V{0.02}$ & $0.03$ & $0.03$ & $0.03$ & $\V{0.02}$ & $\V{0.02}$ & $\V{0.02}$ \\ 
\hline 
\textbf{Haberman}&&&&&&&&\\
Feature 1 & $\V{0.28}$ & $0.39$ & $0.36$ & $0.33$ & $0.30$ & $0.29$ & $\V{0.28}$ & $\V{0.28}$ & $\V{0.28}$ & $\V{0.28}$ & $\V{0.28}$ \\ 
Feature 2 & $\V{0.29}$ & $0.39$ & $0.37$ & $0.35$ & $0.33$ & $0.32$ & $0.30$ & $0.30$ & $0.30$ & $0.30$ & $\V{0.29}$ \\ 
Feature 3 & $0.35$ & $0.46$ & $0.45$ & $0.43$ & $0.40$ & $0.37$ & $0.35$ & $0.35$ & $\V{0.34}$ & $\V{0.34}$ & $\V{0.34}$ \\ 
PCA  & $0.30$ & $0.40$ & $0.38$ & $0.35$ & $0.32$ & $0.30$ & $\V{0.29}$ & $\V{0.29}$ & $\V{0.29}$ & $0.30$ & $\V{0.29}$ \\ 
\hline 
\textbf{Ringnorm}&&&&&&&&\\
Feature 1 & $\V{0.25}$ & $\V{0.25}$ & $\V{0.25}$ & $\V{0.25}$ & $\V{0.25}$ & $\V{0.25}$ & $\V{0.25}$ & $\V{0.25}$ & $\V{0.25}$ & $\V{0.25}$ & $\V{0.25}$ \\  
Feature 2 & $\V{0.25}$ & $\V{0.25}$ & $0.26$ & $\V{0.25}$ & $\V{0.25}$ & $\V{0.25}$ & $\V{0.25}$ & $\V{0.25}$ & $\V{0.25}$ & $\V{0.25}$ & $\V{0.25}$ \\ 
Feature 3 & $\V{0.25}$ & $0.26$ & $0.26$ & $0.26$ & $0.26$ & $0.26$ & $0.26$ & $0.26$ & $0.26$ & $\V{0.25}$ & $\V{0.25}$ \\ 
PCA  & $0.27$ & $0.30$ & $0.29$ & $0.29$ & $0.28$ & $0.28$ & $0.27$ & $0.27$ & $\V{0.26}$ & $0.27$ & $0.27$ \\ 
\hline
\textbf{20 Newsgroups}&&&&&&&&\\
comp vs sci & $0.22$ & $0.25$ & $0.24$ & $0.23$ & $0.22$ & $\V{0.21}$ & $\V{0.21}$ & $\V{0.21}$ & $\V{0.21}$ & $\V{0.21}$ & $\V{0.21}$ \\ 
comp vs talk & $0.11$ & $0.17$ & $0.16$ & $0.14$ & $0.12$ & $0.11$ & $\V{0.10}$ & $\V{0.10}$ & $\V{0.10}$ & $0.11$ & $0.11$ \\ 
rec vs sci & $0.17$ & $0.19$ & $0.18$ & $0.18$ & $0.17$ & $0.17$ & $\V{0.16}$ & $\V{0.16}$ & $\V{0.16}$ & $\V{0.16}$ & $\V{0.16}$ \\ 
rec vs talk & $0.15$ & $0.18$ & $0.17$ & $0.16$ & $0.15$ & $0.15$ & $\V{0.14}$ & $\V{0.14}$ & $\V{0.14}$ & $\V{0.14}$ & $\V{0.14}$ \\   
sci vs talk & $0.20$ & $0.22$ & $0.21$ & $0.20$ & $0.19$ & $0.19$ & $\V{0.18}$ & $\V{0.18}$ & $\V{0.18}$ & $0.19$ & $0.19$ \\ 
\bottomrule
\end{tabular}
\label{Table:D_01}
\end{table*}

\begin{table*}[ht]
\small
\centering
\caption{Classification error in 21 scenarios using DW-GCS methods with log-loss varying the value of the hyperparameter $D$.}
\vspace{0.1cm}
\begin{tabular}{p{2.05cm}>{\centering\arraybackslash}p{1cm}>{\centering\arraybackslash}p{0.89cm}>{\centering\arraybackslash}>{\centering\arraybackslash}p{0.89cm}>{\centering\arraybackslash}p{0.89cm}>{\centering\arraybackslash}p{0.89cm}>{\centering\arraybackslash}p{0.89cm}>{\centering\arraybackslash}p{0.89cm}>{\centering\arraybackslash}p{0.89cm}>{\centering\arraybackslash}p{0.89cm}>{\centering\arraybackslash}p{0.89cm}>{\centering\arraybackslash}p{1cm}}
\toprule
\multirow{2}{*}{Dataset} & proposed & \multirow{2}{*}{$D=1$} & \multirow{2}{*}{$D=1.2$} & \multirow{2}{*}{$D=1.6$} & \multirow{2}{*}{$D=2$} & \multirow{2}{*}{$D=2.8$} & \multirow{2}{*}{$D=4$} & \multirow{2}{*}{$D=6.3$} & \multirow{2}{*}{$D=11$} & \multirow{2}{*}{$D=25$} & \multirow{2}{*}{$D=100$}\\ 
&selection&&&&&&&\\
\midrule
\textbf{Blood}&&&&&&&&\\
Feature 1 & $0.31$ & $0.32$ & $0.32$ & $0.32$ & $0.31$ & $0.30$ & $0.30$ & $0.30$ & $\V{0.29}$ & $\V{0.29}$ & $0.30$ \\ 
Feature 2 & $\V{0.38}$ & $0.41$ & $0.40$ & $0.40$ & $0.40$ & $0.39$ & $\V{0.38}$ & $\V{0.38}$ & $\V{0.38}$ & $\V{0.38}$ & $\V{0.38}$ \\ 
Feature 3 & $0.35$ & $0.38$ & $0.38$ & $0.37$ & $0.36$ & $0.36$ & $0.35$ & $\V{0.34}$ & $\V{0.34}$ & $\V{0.34}$ & $\V{0.34}$ \\ 
PCA  & $\V{0.28}$ & $0.32$ & $0.30$ & $0.29$ & $0.29$ & $\V{0.28}$ & $\V{0.28}$ & $\V{0.28}$ & $\V{0.28}$ & $\V{0.28}$ & $\V{0.28}$ \\ 
\hline 
\textbf{BreastCancer}&&&&&&&&\\
Feature 1 & $\V{0.04}$ & $0.05$ & $0.05$ & $0.05$ & $0.05$ & $\V{0.04}$ & $\V{0.04}$ & $0.05$ & $0.05$ & $\V{0.04}$ & $\V{0.04}$ \\ 
Feature 2 & $\V{0.04}$ & $0.06$ & $0.06$ & $0.06$ & $0.06$ & $0.06$ & $0.06$ & $0.06$ & $0.06$ & $\V{0.04}$ & $\V{0.04}$ \\ 
Feature 3 & $\V{0.04}$ & $0.06$ & $0.06$ & $0.06$ & $0.06$ & $0.06$ & $0.05$ & $0.06$ & $0.05$ & $\V{0.04}$ & $\V{0.04}$ \\ 
PCA  & $\V{0.02}$ & $0.03$ & $0.03$ & $0.03$ & $0.03$ & $0.03$ & $0.03$ & $0.03$ & $\V{0.02}$ & $\V{0.02}$ & $\V{0.02}$ \\ 
\hline 
\textbf{Haberman}&&&&&&&&\\
Feature 1 & $0.29$ & $0.38$ & $0.36$ & $0.34$ & $0.31$ & $0.30$ & $0.29$ & $0.29$ & $0.29$ & $0.29$ & $\V{0.28}$ \\ 
Feature 2 & $\V{0.30}$ & $0.39$ & $0.37$ & $0.35$ & $0.33$ & $0.32$ & $0.31$ & $0.31$ & $0.31$ & $0.31$ & $\V{0.30}$ \\ 
Feature 3 & $0.36$ & $0.46$ & $0.44$ & $0.43$ & $0.40$ & $0.37$ & $0.36$ & $0.35$ & $0.35$ & $0.35$ & $\V{0.34}$ \\ 
PCA & $0.31$ & $0.39$ & $0.38$ & $0.36$ & $0.33$ & $0.31$ & $\V{0.30}$ & $\V{0.30}$ & $\V{0.30}$ & $0.31$ & $\V{0.30}$ \\ 
\hline 
\textbf{Ringnorm}&&&&&&&&\\
Feature 1 & $\V{0.25}$ & $\V{0.25}$ & $\V{0.25}$ & $\V{0.25}$ & $\V{0.25}$ & $\V{0.25}$ & $\V{0.25}$ & $\V{0.25}$ & $\V{0.25}$ & $\V{0.25}$ & $\V{0.25}$ \\  
Feature 2 & $\V{0.25}$ & $\V{0.25}$ & $\V{0.25}$ & $\V{0.25}$ & $\V{0.25}$ & $\V{0.25}$ & $\V{0.25}$ & $\V{0.25}$ & $\V{0.25}$ & $0.24$ & $\V{0.25}$ \\
Feature 3 & $\V{0.25}$ & $0.26$ & $0.26$ & $0.26$ & $\V{0.25}$ & $0.26$ & $\V{0.25}$ & $\V{0.25}$ & $\V{0.25}$ & $\V{0.25}$ & $\V{0.25}$ \\ 
PCA  & $\V{0.26}$ & $0.30$ & $0.29$ & $0.29$ & $0.28$ & $0.27$ & $0.27$ & $\V{0.26}$ & $\V{0.26}$ & $\V{0.26}$ & $0.27$ \\ 
\hline
\textbf{20 Newsgroups}&&&&&&&&\\
comp vs sci & $0.22$ & $0.25$ & $0.24$ & $0.23$ & $0.22$ & $\V{0.21}$ & $\V{0.21}$ & $\V{0.21}$ & $\V{0.21}$ & $\V{0.21}$ & $\V{0.21}$ \\ 
comp vs talk & $0.11$ & $0.17$ & $0.16$ & $0.14$ & $0.12$ & $0.11$ & $\V{0.10}$ & $\V{0.10}$ & $\V{0.10}$ & $0.11$ & $0.11$ \\ 
rec vs sci & $0.17$ & $0.19$ & $0.18$ & $0.18$ & $0.17$ & $0.17$ & $\V{0.16}$ & $\V{0.16}$ & $\V{0.16}$ & $\V{0.16}$ & $\V{0.16}$ \\ 
rec vs talk & $0.15$ & $0.18$ & $0.17$ & $0.16$ & $0.15$ & $0.15$ & $\V{0.14}$ & $\V{0.14}$ & $\V{0.14}$ & $\V{0.14}$ & $\V{0.14}$ \\   
sci vs talk & $0.20$ & $0.22$ & $0.21$ & $0.20$ & $0.19$ & $0.19$ & $\V{0.18}$ & $\V{0.18}$ & $\V{0.18}$ & $0.19$ & $0.19$ \\ 
\bottomrule
\end{tabular}
\label{Table:D_log}
\end{table*}

\end{document}